\def\BibTeX{{\rm B\kern-.05em{\sc i\kern-.025em b}\kern-.08em
    T\kern-.1667em\lower.7ex\hbox{E}\kern-.125emX}}
\newcolumntype{P}[1]{>{\centering\arraybackslash}p{#1}}
\newcommand{\oper}[1]{\mathcal{#1}}
\newcommand{\norm}[1]{\left\|#1\right\|}
\newcommand{\E}{\mathbb{E}}
\newcommand{\R}{\mathbb{R}}
\newcommand\numberthis{\addtocounter{equation}{1}\tag{\theequation}}
\newtheorem{ass}{Assumption}
\newtheorem{thm}{Theorem}
\newtheorem{lem}{Lemma}
\newtheorem{prop}{Proposition}
\DeclareMathOperator{\supp}{supp}
\def\bw{\mathbf{w}}
\def\O{\mathcal{O}}
\def\X{\mathcal{X}}
\def\Y{\mathcal{Y}}
\def\bx{\mathbf{x}}
\def\bz{\mathbf{z}}
\def\auc{\text{AUC}}
\def\mbI{\mathbb{I}}
\def\S{\mathcal{S}}
\def\shtauc{\texttt{SHT-AUC}\;}
\begin{document}

\title{Stochastic Hard Thresholding Algorithms for AUC Maximization}

\author{\IEEEauthorblockN{
	Zhenhuan Yang\IEEEauthorrefmark{1},
		Baojian Zhou\IEEEauthorrefmark{2},
		Yunwen Lei\IEEEauthorrefmark{3},
		Yiming Ying\IEEEauthorrefmark{1}
	}
	\IEEEauthorblockA{\IEEEauthorrefmark{1}University at Albany, Albany, USA}
	\IEEEauthorblockA{\IEEEauthorrefmark{2}Stony Brook University, Stony Brook, USA}
	\IEEEauthorblockA{\IEEEauthorrefmark{3}School of Computer Science, University of Birmingham, Birmingham, UK}
	Email: zyang6@albany.edu, baojian.zhou@cs.stonybrook.edu, yunwen.lei@htomail.com,
	yying@albany.edu}

\maketitle

\begin{abstract}
In this paper, we aim to develop stochastic hard thresholding algorithms for the important problem of AUC maximization in imbalanced classification.   The main challenge is the pairwise loss involved in AUC maximization. We overcome this obstacle   by reformulating the U-statistics objective function as an empirical risk minimization (ERM), from which a stochastic hard thresholding algorithm (\texttt{SHT-AUC}) is developed. To our best knowledge, this is the first attempt to provide stochastic hard thresholding algorithms for AUC maximization with a per-iteration cost $\O(b d)$ where $d$ and $b$ are the dimension of the data and the minibatch size, respectively.   We show that the proposed algorithm enjoys the linear  convergence rate up to a tolerance error.  In particular, we show, if the data is generated from the Gaussian distribution, then its convergence becomes slower as the data gets more imbalanced.   We conduct extensive experiments to show the efficiency and effectiveness of the proposed algorithms.  
\end{abstract}

\begin{IEEEkeywords}
Area Under the ROC Curve (AUC), sparse learning, stochastic hard thresholding, imbalanced classification
\end{IEEEkeywords}

\section{Introduction}
Recently, there are a  considerable amount of work on developing efficient  algorithms for optimizing the Area under the ROC curve  (AUC) score. It is a widely used performance measure for imbalanced data classification  \cite{bradley1997use,fawcett2006introduction,hanley1982meaning,Huang2003} which arises from applications including anomaly detection, information retrieval to cancer diagnosis.  In particular, the work~\cite{Huang2003} showed that the AUC score is, in general, a better  measure than accuracy for evaluating the predictive performance of many data mining algorithms.

In particular, the work by  \cite{Joachims, herschtal2004optimising} employed the cutting plane method and  gradient descent algorithm, respectively.   \cite{Xinhua} developed the Nesterov's accelerated gradient algorithms \cite{nesterov1983method} for optimizing the multivariate performance measures \cite{Joachims}.  The work of \cite{Culver2006} used ideas of  active learning to design heuristic algorithms for AUC maximization. Such optimization algorithms train the model on the whole training data which are not scale well to the high-dimensional data.  Stochastic gradient descent (SGD) algorithms are widely used for high dimensional and large-scale data analysis due to its cheap per-iteration cost. In this aspect,   variants of stochastic (online) gradient descent algorithms have been developed for AUC maximization.  Specifically, \cite{Wang2012,Kar2013,Zhao2011} proposed to a variant of online (projected) gradient descent method. At time $t$, these methods need to compare the current example with previous ones, which have high per-iteration $\O(t d)$ for $d$-dimensional data.  There are various techniques such as using the buffering set to alleviate the bottleneck but the size of the buffer set needs to be large in order to guarantee a good generalization. The appealing work by \cite{gao2013} observed in the case of the least square loss that the updates of these algorithms only rely on the covariance matrix where the per-iteration cost is of $\O(d^2).$ For high dimensional data, it used an appealing low-rank matrix to approximate the covariance matrix in order to reduce the per-iteration costs which may  not  be an ideal solution. Hence, such algorithms have an expensive per-iteration cost, making them not amenable for high dimensional data analysis.  There are some recent work on nonlinear AUC maximization methods such as \cite{gultekin2020mba,khalid2018scalable,liu2019stochastic}. Recently, \cite{ying2016stochastic} reformulated the  AUC maximization problem as a stochastic saddle point (min-max) problem (e.g.  \cite{nemirovski2009robust}), from which a stochastic primal-dual gradient  algorithm was proposed. This algorithm successfully reduced the per-iteration cost to $\O(d).$
\cite{liu2018fast} followed this saddle point formulation for AUC maximizaiton with $\ell_1$ constraints and proposed a fast  multi-stage SGD algorithm. In \cite{Natole2018}, fast SGD-type algorithms were developed for more general strongly convex regularization.

For high-dimensional data analysis, an underlying hypothesis is the sparsity of the data  representation \cite{guyon2002gene,Golub1999un,hromadka2008sparse,candes2008introduction,wright2008robust}. To obtain a sparse solution,  many algorithms have been developed among which the prominent one is based on variants of $\ell_1$-norm constraints (regularization) which includes group lasso \cite{yuan2006model,turlach2005simultaneous}, tree structured group lasso \cite{jacob2009group,jenatton2011structured} etc.  Such approaches are convex and can be solved efficiently by convex optimization.  Concurrently, sparse learning for AUC maximization has been developed  in  \cite{Natole2018,lei2019stochastic,liu2018fast} using $\ell_1$ regularization, where stochastic primal-dual  gradient-type algorithms (SGD) have been developed.  However, as many researchers observed  \cite{duchi2009efficient,langford2009sparse,xiao2010dual},  $\ell_1$-based stochastic algorithms are appealing convex approach which may be  hard to preserve a truly sparse solution. In contrast, the greedy pursuit based on the sparse $\ell_0$ constraints can recover the sparse structure well, among which the most prominent one is  gradient hard thresholding \cite{nguyen2017linear,zhou2018efficient,murata2018sample,Shen2018,liu2017dual,zhou2018efficient}.  In addition, compared to the convex $\ell_1$-norm based methods, hard thresholding algorithms are always orders of magnitude computationally more efficient for large-scale problems \cite{tropp2010computational}.

However, the existing hard thresholding algorithms are developed for the classical regression and classification where the loss is pointwise, i.e. it depends on one data point.  These algorithms can not directly apply to the setting of AUC maximization as its objective function is in the form of U-statistics \cite{clemenccon2008ranking} where the pairwise loss function depends on a pair of data points. 

In this paper, we aim to develop stochastic hard threholding algorithm for the problem of AUC maximization in imbalanced classification.  The main challenge is the pairwise loss involved in AUC maximziaiton. We overcome this obstacle by leveraging the ideas from \cite{Natole2018,ying2016stochastic} by reformulating the U-statistics objective function as a standard empirical risk minimization (ERM). In particular,  the reformulated AUC objective does not necessarily possess the strong convexity property as a whole. Instead,  it is assumed that the objective function obeys the restricted strong convexity and restricted smoothness (RCS/RSS)~\cite{Negahban2009,agarwal2012fast}. The main contribution of the paper is summarized as follows

 $\bullet$ We reformulate the empirical AUC objective in the form of U-statistics as an ERM objective, from which a stochastic hard thresholding algorithm (referred to as \shtauc) is developed. To our best knowledge, this is the first attempt to provide stochastic hard thresholding algorithms for AUC maximization with a per-iteration cost $\O( b\,d)$ where $d$ and $b$ are the dimension of the data and the size of minibatch, repectively.

 $\bullet$  We show that the proposed algorithm enjoys the linear  convergence up to a tolerance error under RCS/RSS properties. We then characterize the RCS/RSS properties in AUC context. In particular, we show, if the data is generated from the Gaussian distribution, that its convergence becomes slower as the data gets more imbalanced, i.e.  the imbalance ratio is getting smaller.

 $\bullet$ We conduct extensive experiements to validate the proposed algorithm (\shtauc) on both simulated and real-world datasets. Our experiments show that the proposed algorithms outperform the existing algorithms in terms of AUC score and the ability of selecting meaningful features.

\noindent{\bf Outline of the paper}.  The rest of this paper are organized as follows.   In Section 2, we reformulate the objective function of AUC maximization, and present the Stochastic Hard Thresholding Algorithm for AUC maximization (i.e. \shtauc). In Section 3, we present its convergence rate and discuss the implication of the theoretical results. In Section 4, we perform experiments on both simulation and real-world datasets to validate the proposed algorithm.   

The detailed proofs for the theoretical results, and the source code of all methods and datasets are available at \url{https://github.com/baojianzhou/sparse-auc}.

\section{Problem Formulation and Proposed Algorithm}

In this section, we introduce necessary notations, formulate the problems of AUC maximization, and present the stochastic hard thresholding algorithm for AUC maximization (\shtauc).

\subsection{Preliminaries}
Given an integer $n \geq 1$, we define $[n] = \{1,...,n\}$.  The standard Euclidean norm of  vector $\bm v = (v_1,...,v_d)^\top \in \mathbb{R}^d$ is denoted by $\|\bm v\|_2 = \sqrt{\sum_{i=1}^{d}v_i^2}$. For any   $\bm v, \bm w \in \mathbb{R}^d$, the inner product is given by $\left<\bm v, \bm w\right> = \sum_{i=1}^d v_i w_i$. The support set of $\bm v$, i.e. indices of non-zeros, is denoted by $\supp(\bm v)$ whose cardinality is written as $\|\bm v\|_0$.  For any integer $d > 0$, suppose that $\Omega$ is a subset of $[d]$. Then for any vector $\bm v \in \mathbb{R}^d$, we define $\mathcal{P}_{\Omega}(\cdot)$ as the orthogonal projection to the support set $\Omega$ which is defined by 
$\left(\mathcal{P}_{\Omega}(\bm v)\right)_i = v_i $ if $ i \in \Omega$ and $0$ otherwise. 
In particular, let $\Gamma$ be the support set indexing the $k$ largest absolute components of $\bm v$. In this way, the hard thresholding operator is given by \begin{align}\label{eq:ht}\mathcal{H}_k(\bm v) = \mathcal{P}_\Gamma (\bm v).\end{align}

Let $\X$ be a domain in $\R^d$ and $\Y= \{\pm 1\}. $ Assume that the training data   $ \S = \{\bz_i= (\bx_i,y_i) \in \X\times \Y:   i \in [n]\}$ is drawn i.i.d from an unknown distribution on  $\mathcal{Z} = \mathcal{X}\times \mathcal{Y}.$ For each $1 \leq i \leq n$, if $y_i = 1$ we say $\bm z_i$ is a positive example otherwise it is a negative example. Let $n_+$ denote the number of positive examples and $n_-$ denote the number of negative examples, and define $r = \frac{n_+}{n}$ as the {\em imbalanced ratio}. Without loss of generality, we assume $n_- \ge n_+$, i.e.  $r \leq 1/2$.

  {\bf Definition of AUC}. 
AUC score \cite{hanley1982meaning,cortes2004auc} measures the probability for a randomly drawn positive instance to have a higher decision value than a randomly sampled negative instance.  Specifically, for any $\bw$, the AUC score on the data $\S$ is defined by 
\begin{align*}\label{eq:auc-def}\auc(\bw) =  \frac{1}{n_+ n_- }\sum_{i,j=1}^n \mbI_{[\bw^\top  (x_i-x_j)> 0]} \mbI_{[y_i=1]}\mbI_{[y_j=-1]},\numberthis
\end{align*} 
where  $\mbI_{[\cdot]}$ is the indicator function which is $1$ for the true event and $0$ otherwise.  The higher the AUC score is, the better performance of the linear function parametrized by $\bw$ will be.  Maximizing the AUC score is equivalent to minimizing $1-\auc(\bw) =  \frac{1}{n_+ n_- }\sum_{i,j=1}^n \mbI_{[\bw^\top  (x_i-x_j)\le  0]} \mbI_{[y_i=1]}\mbI_{[y_j=-1]}.$ In practice, the discontinuous indicator function $\mbI_{[\bw^\top  (x_i-x_j)\le 0]}$ is replaced by a relaxed convex function. As done in \cite{gao2013,ying2016stochastic,Natole2018,liu2018fast}, in this paper we restrict our attention to the least square loss, i.e. replacing  $\mbI_{[\bw^\top  (x_i-x_j)\le 0]}$ by $(1- \bw^\top(x_i-x_j))^2.$ 

Now sparse AUC maximization with $\ell_0$ constraints  is given by 
\begin{align}\label{eq:sparse-auc}
 \min_{\|\bw\|_0\le k_*}  \frac{1}{n_+ n_-}\sum_{i,j=1} (1- \bw^\top(x_i-x_j))^2 \mbI_{[y_i =1]} \mbI_{[y_j=-1]}.
\end{align}
The objective function $F(\bw) = \frac{1}{n_+n_-}\hspace*{-1mm}\sum_{i,j=1} \hspace*{-0.5mm}(1- \bw^\top(x_i-x_j))^2 \mbI_{[y_i =1]} \mbI_{[y_j=-1]}$ is the average of pairwise losses and has the form of U-statistics \cite{clemenccon2008ranking}. 

  {\bf Objective and Challenges.} Our main objective in this paper is to develop efficient stochastic optimization algorithms for the sparse AUC maximization formulation \eqref{eq:sparse-auc} which is scalable to large scale and high-dimensional imbalanced data.

One possible approach is to directly apply stochastic hard thresholding algorithms \cite{shen2016online,nguyen2017linear,zhou2018efficient} to the setting of AUC maximization by regarding pairs of examples as individual ones, i.e. at each time we randomly samples a pair of examples or a minibatch of pairs to update model parameters.   However, this means that one pass of the data, i.e. passing all pairs,  will require $n$ passes of the original dataset $\S$ which makes it not suitable for large-scale and high-dimensional data analysis. To address this challenge, we show in the following subsections that this multiple passes can be avoided by  reformulating the minimization problem of pairwise U-statistics objective function $F(\bw)$ as a novel ERM formulation. From this new reformulation, we can develop efficient stochastic hard thresholding algorithms for AUC maximization.

\subsection{Equivalent Reformulation}
Inspired by the work \cite{ying2006online,Natole2018,lei2019stochastic}, we will formulate the U-statistics objective function in \eqref{eq:sparse-auc} as an ERM objective function, i.e.  singled-summed objective function. For this purpose, let the positive and negative sample mean be respectively denoted by 
$\overline{\bm x}_+ = \frac{\sum_{i=1}^n \bm x_i\mathbb{I}_{[y_i=1]}}{n_+},~\overline{\bm x}_- = \frac{\sum_{i=1}^n \bm x _i\mathbb{I}_{[y_i=-1]}}{n_-}.$
Then, we have the following proposition.  
\begin{prop}\label{prop:formulation}
The empirical AUC objective function $F(\bw)$ given by \eqref{eq:sparse-auc} can be reformulated as 
\begin{equation}
\label{eq:single-sum-AUC}    
F(\bm w) = \frac{1}{n}\sum_{i=1}^{n} \widetilde{f}(\bm w; \bm z_i)
\end{equation}
where $\widetilde{f}(\bm w; \bm z_i) = \frac{1}{r}(\bm w^\top (\bm x_i - \overline{\bm x}_+))^2\mathbb{I}_{[y_i=1]}
  + \frac{1}{1-r}(\bm w^\top (\bm x_i - \overline{\bm x}_-))^2\mathbb{I}_{[y_i=-1]}  
  +1 + 2\bm w^\top (\overline{\bm x}_- - \overline{\bm x}_+)+(\bm w^\top (\overline{\bm x}_- - \overline{\bm x}_+))^2.$
\end{prop}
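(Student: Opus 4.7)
The plan is to verify the identity by pure algebraic expansion and then regroup the terms in a way that naturally produces the per-sample form $\widetilde{f}(\bw;\bz_i)$. The obstacle is not conceptual but bookkeeping: the original double sum mixes a diagonal-free quadratic term $(\bw^\top x_i)(\bw^\top x_j)$ with the indicators $\mathbb{I}_{[y_i=1]}\mathbb{I}_{[y_j=-1]}$, so one has to be careful to collapse the two summations independently rather than trying to symmetrize.

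First I would expand the squared pairwise loss:
\[
(1-\bw^\top(x_i-x_j))^2 = 1 - 2\bw^\top x_i + 2\bw^\top x_j + (\bw^\top x_i)^2 - 2(\bw^\top x_i)(\bw^\top x_j) + (\bw^\top x_j)^2.
\]
Plugging this into $F(\bw)$ and using $\frac{1}{n_+n_-}\sum_{i,j} x_i \mathbb{I}_{[y_i=1]}\mathbb{I}_{[y_j=-1]} = \overline{\bm x}_+$ (and symmetrically for the $j$-sum giving $\overline{\bm x}_-$), the linear and constant pieces collapse to $1 + 2\bw^\top(\overline{\bm x}_- - \overline{\bm x}_+)$. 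The quadratic pieces split as
\[
\frac{1}{n_+}\sum_{i=1}^n (\bw^\top x_i)^2\mathbb{I}_{[y_i=1]} + \frac{1}{n_-}\sum_{j=1}^n (\bw^\top x_j)^2\mathbb{I}_{[y_j=-1]} - 2(\bw^\top \overline{\bm x}_+)(\bw^\top \overline{\bm x}_-),
\]
since the cross term $-2(\bw^\top x_i)(\bw^\top x_j)$ factorizes across $i$ and $j$.

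Next I would move the two diagonal-class quadratics into centered form via the identity
\[
\frac{1}{n_\pm}\sum_{i}(\bw^\top x_i)^2 \mathbb{I}_{[y_i=\pm 1]} = \frac{1}{n_\pm}\sum_{i}(\bw^\top(x_i - \overline{\bm x}_\pm))^2 \mathbb{I}_{[y_i=\pm 1]} + (\bw^\top \overline{\bm x}_\pm)^2,
\]
which follows from the fact that $\overline{\bm x}_\pm$ is the mean of the corresponding class. Using $r = n_+/n$ so that $\frac{1}{n_+} = \frac{1}{nr}$ and $\frac{1}{n_-} = \frac{1}{n(1-r)}$, the two centered quadratic sums become exactly $\frac{1}{n}\sum_i$ of the first two terms in $\widetilde{f}(\bw;\bz_i)$.

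Finally I would collect the leftover scalars: the centering identity leaves $(\bw^\top \overline{\bm x}_+)^2 + (\bw^\top \overline{\bm x}_-)^2$, and combining with the existing $-2(\bw^\top \overline{\bm x}_+)(\bw^\top \overline{\bm x}_-)$ yields exactly $(\bw^\top(\overline{\bm x}_- - \overline{\bm x}_+))^2$. Together with $1 + 2\bw^\top(\overline{\bm x}_- - \overline{\bm x}_+)$, these pieces are independent of $i$ and can be absorbed into $\frac{1}{n}\sum_i$ as the last three additive terms of $\widetilde{f}$. That matches the claimed formula. The only place where one might stumble is the sign bookkeeping on the cross term and the cancellation of $(\bw^\top \overline{\bm x}_\pm)^2$, so I would carry those explicitly rather than by symmetry.
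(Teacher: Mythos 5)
Your proof is correct and is essentially the same elementary algebraic verification as the paper's: both hinge on the class means $\overline{\bm x}_\pm$ absorbing the cross terms, the only difference being the order of operations (the paper inserts $\pm\overline{\bm x}_+$ and $\pm\overline{\bm x}_-$ inside the loss \emph{before} squaring so the cross terms vanish by centering, whereas you expand the square first, collapse the double sum, and then recenter via the variance identity $\frac{1}{n_\pm}\sum_i(\bm w^\top \bm x_i)^2\mathbb{I}_{[y_i=\pm 1]} = \frac{1}{n_\pm}\sum_i(\bm w^\top(\bm x_i-\overline{\bm x}_\pm))^2\mathbb{I}_{[y_i=\pm 1]} + (\bm w^\top\overline{\bm x}_\pm)^2$). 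All the sign and cancellation bookkeeping you flag checks out, so no gap.
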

 
The proof of Proposition~\ref{prop:formulation} is inspired by \cite{ying2016stochastic,Natole2018}.  However, the original proofs there need to introduce three auxiliary variables. Our proof is  much simpler and straightforward without introducing auxiliary variables.

Assume that $n$ can be divided by $m$ and let the black size $b= n/m$. Then,  in order to apply minibatch updates, let $\{B_i: i \in [m]\}$ denote non-overlapping subsets of $\S$, each of which is of size $b$. Therefore, the U-statistics form in the AUC maximization problem \eqref{eq:sparse-auc} , with $m = n/b$, can be formulated as the following ERM with sparse constraints: 
\begin{equation}\label{eq:decomposed-objective}
   	 \bw_* = \arg\min_{\|\bw\|_0 \leq k_*} F(\bm w) = \frac{1}{m}\sum_{i=1}^{m}f_{B_i}(\bm w).
\end{equation}
where $f_{B_i}(\bm w) = \frac{1}{b}\sum_{j\in B_i}\widetilde{f}(\bm w; \bm z_j)$ is a block objective.

\subsection{The \texorpdfstring{$\shtauc$}  Algorithm}

From the formulation \eqref{eq:decomposed-objective},  we are ready to  present  the Stochastic   Hard Thresholding AUC Optimization algorithm which is referred to as \texttt{SHT-AUC}.

\begin{algorithm}
\caption{\shtauc Algorithm}
\label{alg:SG-HT}
   	\begin{algorithmic}
   		\STATE {\bfseries Input:} Relaxed sparsity level $k$, step size $\gamma$, initial classifier $\bm w_0$ such that $\|\bm w_0\|_0 \leq k$
   		\STATE {\bfseries Compute:} $\overline{\bx}_+$ and  $ \overline{\bx}_- $
   		\FOR{$t=0$ to $T-1$}
   		\STATE Randomly selected $i_t \in [m]$
      	\STATE $\bm w_{t+1} = \mathcal{H}_{k}\left(\bm w_t - \gamma \nabla f_{B_{i_t}}(\bm w_t)\right)$
   		\ENDFOR
   		\STATE {\bfseries Output:} $\bm w_{T}$
	\end{algorithmic}
  \end{algorithm}

The pseudo-code is given by  Algorithm~\ref{alg:SG-HT} which is taken from \cite{Nguyen2017,needell2009cosamp,Shen2018}.  It can be regarded as "expansive" projected SGD with projections to the $\ell_0$ constraints. Specifically,  at each iteration,  it randomly selects $i_t$ from $[m]$ with probability $\frac{1}{m}$, and hence the minibatch $B_{i_t}.$ Then, the current model parameter $\bw_t$ is updated using projected gradient descent based on the gradient $\nabla f_{B_{i_t}}$, which is the hard thresholding operator given by~\eqref{eq:ht}. The main computation is $\mathcal{O}(bd)$ with the gradient and $\mathcal{O}(d)$ with the hard thresholding. Hence the per-iteration cost is $\mathcal{O}(bd)$.  To further explain why the Hard Thresholding Operator $\operatorname{H}_k(\bm w):\mathbb{R}^d\mapsto \mathbb{R}^d$ in Algorithm~\ref{alg:SG-HT} only needs time complexity $\O(d)$, firstly we consider a common choice to fulfill $\operatorname{H}_k$ – use sorting algorithm of $\bm w$ with the time complexity $\O(d\log(d))$ in expectation, such as quick sort, then pick the largest $k$ entries. In fact, given $\bm w \in \mathbb{R}^d$ and the sparsity $k \ll d $, the Hard Thresholding Operator can be computed as following
\begin{equation}
\mathcal{H}_k(\bm w) = \bm w \cdot \mbI_{|\bm w| \geq |w_{\tau_k}\bm 1_d|}, \label{inequ:hard-thresholding}
\end{equation}
where $\tau_k$ is the index of $k$-th largest magnitude among $|w_1|, |w_2|,$ $\ldots, |w_d|$ and all operations in~(\ref{inequ:hard-thresholding}) are element-wise. Clearly, if we know $w_{\tau_k}$ in advance, we can get the solution of the operator in $\Theta(d)$. To find $w_{\tau_k}$, we choose the Floyd-Rivest algorithm~\cite{blum1973time} with time complexity $\O(d)$ in expectation. Algorithm~\ref{alg:folyd-rivest} is the pseudo-code of the Floyd-Rivest method. One can immediately find $|w_{\tau_k}| = |w_k|$ after call $\textbf{Select}(\bm w,0,d-1,k)$.

Another note is here the sparsity level $k$ in \shtauc is not necessary to be $k_*$. In particular, the flexible choice of  of $k\ge k_\ast$ follows the appealing work  \cite{Jain2014,Shen2018}, which  will allow a relaxed projection to the $\ell_0$ constraints, and therefore lead to tighter bounds as we can see below.  

\begin{algorithm*}
\caption{\textbf{Select}$({\bm w}, l, r, k)$: Floyd-Rivest Algorithm~(\cite{blum1973time})}
\begin{multicols}{2}
\begin{algorithmic}[1]
\WHILE{$r > l$}
\IF{$r - l > 600$}
\STATE $n = r - l + 1$; $i = k - l + 1$;
\STATE $z = \ln(N)$; $s = 0.5 * \exp(2*z/3)$;
\STATE $sd = 0.5 * \sqrt{z*s*(n-s)/n}*\operatorname{sign}(i-n/2)$;
\STATE $ll = \max(l,k-i*s/n + sd)$;
\STATE $rr = \min(r,k+(n-1)*s/n + sd)$;
\STATE $\textbf{Select}(\bm w, ll, rr, k)$;
\ENDIF
\STATE $t = w_k$; $i = l$; $j = r$; $\operatorname{swap}(w_l,w_k)$
\WHILE{$i < j$}
\STATE $\operatorname{swap}(w_i,w_j)$; $i=i+1$; $j=j-1$;
\WHILE{$w_i < t$}
\STATE $i = i+1$;
\ENDWHILE
\WHILE{$w_j < t$}
\STATE $j = j-1$;
\ENDWHILE
\ENDWHILE
\IF{$w_l = t$}
\STATE $\operatorname{swap}(w_l,w_j)$;
\ELSE
\STATE $j = j+1$; $\operatorname{swap}(w_j,w_r)$;
\ENDIF
\IF{$j \leq k$}
\STATE $l = j+1$;
\ENDIF
\IF{$k \leq j$}
\STATE $r = j-1$;
\ENDIF
\ENDWHILE
\end{algorithmic}\label{alg:folyd-rivest}
\end{multicols}
\end{algorithm*}

\section{Convergence Analysis}
In this section, we turn to the convergence analysis of \shtauc algorithm. The convergence typically need the following standard assumptions. 

\begin{ass}\label{ass:rsc}
	The function $F(w)$ satisfies the $\rho_k^-$-restricted strong convexity (RSC) condition if there exists a positive constant $\rho_k^-$ such that 
	\begin{equation}\label{eq:rsc}
		F(\bw') - F(\bw) - \left<\nabla F(\bw),\bm w'-\bw\right> \geq \frac{\rho^-_k}{2}\|\bm w'-\bw\|_2^2
	\end{equation} for any  $\bw$ and $\bw'$ such that $|\supp(\bw) \cup \supp(\bw')| \leq k$.
\end{ass}

\begin{ass}\label{ass:rss}
For all $1 \leq i \leq m$, the function $f_{B_i}(w)$ satisfies the $\rho_k^+$-restricted strong smoothness (RSS) condition if there exists a positive constant $\rho_k^+$ such that 
\begin{equation}\label{eq:rss}
	\|\nabla f_{B_i}(\bm w) - \nabla f_{B_i}(\bm w')\|_2 \leq \rho^+_k \|\bm w-\bm w'\|_2
\end{equation} for all vectors $\bm w$ and $\bm w'$ such that $|\supp(\bm w) \cup \supp(\bm w')|\leq k$.

\end{ass}  

RSC/RSS properties are firstly introduced in~\cite{Negahban2009}. Since it captures sparsity of many functions, it has been widely used for designing sparsity constrained algorithms~\cite{Jain2014,nguyen2017linear,Shen2018,zhou2018efficient,elenberg2018restricted}. Here, we define the {\em $k$-restricted condition number} to be $\rho_k = \rho_k^+/\rho_k^-$.

In the sequel, we will first state the convergence results related to the RSC and RSS conditions. Then we will prove that the objective function of AUC maximization defined by $F(\bw)$ satisfies the RSC and RSS conditions and discuss their  implications on the convergence of \shtauc.

\subsection{General Convergence Results}   
To state the convergence of \shtauc recall that $k_*$ is the desired sparsity level and $k$ is the relaxed sparsity level. We are now ready to state the general convergence result of \shtauc. 

\begin{thm}
\label{thm:StoIHT}
Let $w_*$ be a $k_*$-sparse vector of interest, and $w_0$ be the initial solution. Consider the problem~\eqref{eq:decomposed-objective} with sparsity level $k$ such that $d \gg k > (\rho_{2k+k_*}^2 - \rho_{2k+k_*})k_*$. Select $\gamma = \frac{1}{\rho_{2k+k_*}^+}$ and let $\nu = 1 + k_*/k+\sqrt{k_*/k}$, we have, 
\begin{equation}
\mathbb{E} \left\|\bm w_{t+1} - \bm w_*\right\|_2 \leq {\kappa}^{t+1} \left\|\bm w_0 - \bm w_*\right\|_2  + \frac{{\sigma}_{\bm w_*} }{1-{\kappa}} 
\end{equation}
where the expectation is taken over all choices of random variables $i_0,...,i_t$. Here 
\begin{equation}
\label{eq:new kappa of SG-HT}
\kappa = \sqrt{\nu\left(1-1/\rho_{2k+k_*}\right)} < 1 
\end{equation} is the convergence parameter and
\begin{equation}
\label{eq:new sigma of SG-HT}
{\sigma}_{\bm w_*} = \frac{\gamma}{m} \sqrt{\nu}  \sum_{i = 1}^{m} \max_{|\Omega|\leq 2k+k_*} \left\| \mathcal{P}_{\Omega}\left(\nabla f_{B_{i}}(\bm w_*)\right)\right\|_2
\end{equation}
is the tolerance error parameter.
\end{thm}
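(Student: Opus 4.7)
The plan is to follow the standard analysis template for stochastic IHT (in the spirit of Nguyen et al.~2017 and Shen--Li 2018), adapted to the minibatch reformulation~\eqref{eq:decomposed-objective}. I would prove a one-step contraction on $\|\bm w_t-\bm w_*\|_2$ in conditional expectation and then unroll. Fix iteration $t$, set $\bm b_t=\bm w_t-\gamma\nabla f_{B_{i_t}}(\bm w_t)$ and $\Omega_t=\supp(\bm w_t)\cup\supp(\bm w_{t+1})\cup\supp(\bm w_*)$, so $|\Omega_t|\leq 2k+k_*$. The first step is the tight hard thresholding inequality: since $\bm w_{t+1}=\mathcal{H}_k(\bm b_t)$ is the best $k$-term approximation of $\bm b_t$ and $\bm w_*$ is $k_*$-sparse,
\begin{equation*}
\|\bm w_{t+1}-\bm w_*\|_2 \leq \sqrt{\nu}\,\|\mathcal{P}_{\Omega_t}(\bm b_t-\bm w_*)\|_2,
\end{equation*}
with the constant $\nu=1+k_*/k+\sqrt{k_*/k}$; this is the only step in the proof where the relaxation $k>k_*$ enters quantitatively.

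Next I decompose, using that $\bm w_t-\bm w_*$ is already supported in $\Omega_t$,
\begin{equation*}
\mathcal{P}_{\Omega_t}(\bm b_t-\bm w_*) = (\bm w_t-\bm w_*) - \gamma\mathcal{P}_{\Omega_t}\bigl(\nabla f_{B_{i_t}}(\bm w_t)-\nabla f_{B_{i_t}}(\bm w_*)\bigr) - \gamma\mathcal{P}_{\Omega_t}\nabla f_{B_{i_t}}(\bm w_*),
\end{equation*}
and apply the triangle inequality. For the first (contraction) piece I expand the squared norm. Because $\widetilde f(\cdot;\bm z)$ in Proposition~\ref{prop:formulation} is a convex quadratic, each $f_{B_i}$ is convex in addition to being $\rho_{2k+k_*}^+$-smooth on supports of size $\leq 2k+k_*$, so the co-coercivity bound $\|\nabla f_{B_{i_t}}(\bm w_t)-\nabla f_{B_{i_t}}(\bm w_*)\|_2^2\leq \rho_{2k+k_*}^+\langle\nabla f_{B_{i_t}}(\bm w_t)-\nabla f_{B_{i_t}}(\bm w_*),\,\bm w_t-\bm w_*\rangle$ applies; with $\gamma=1/\rho_{2k+k_*}^+$ the cross and quadratic terms collapse into a single inner product. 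Taking $\mathbb{E}_{i_t}[\,\cdot\,]$ converts $\nabla f_{B_{i_t}}$ into $\nabla F$ by uniform sampling over $[m]$, and Assumption~\ref{ass:rsc} bounds this piece by $(1-1/\rho_{2k+k_*})\|\bm w_t-\bm w_*\|_2^2$. Jensen's inequality pulls the square root inside the expectation.

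For the second (noise) piece I use $\|\mathcal{P}_{\Omega_t}\nabla f_{B_{i_t}}(\bm w_*)\|_2\leq \max_{|\Omega|\leq 2k+k_*}\|\mathcal{P}_\Omega\nabla f_{B_{i_t}}(\bm w_*)\|_2$ to eliminate the $i_t$-dependence of $\Omega_t$, then average over $i_t\sim\mathrm{Unif}[m]$; after multiplication by the $\sqrt{\nu}$ from step~1 this reproduces exactly $\sigma_{\bm w_*}$ as in~\eqref{eq:new sigma of SG-HT}. Assembling the pieces gives the one-step recursion $\mathbb{E}\|\bm w_{t+1}-\bm w_*\|_2\leq \kappa\,\mathbb{E}\|\bm w_t-\bm w_*\|_2+\sigma_{\bm w_*}$ with $\kappa=\sqrt{\nu(1-1/\rho_{2k+k_*})}$. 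The hypothesis $k>(\rho_{2k+k_*}^2-\rho_{2k+k_*})k_*$ is precisely what guarantees $\kappa<1$; iterating the recursion and summing the resulting geometric series yields the claimed bound.

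The main obstacle I anticipate is the coupling between the random support $\Omega_t$ (which depends on $i_t$ through $\bm w_{t+1}$) and the conditional expectation over $i_t$. I handle this in two places: in the contraction step the co-coercivity bound must be derived at the level of the full, unprojected gradient difference so that $\Omega_t$ drops out before $\mathbb{E}_{i_t}$ is taken; in the noise step the $\max_{|\Omega|\leq 2k+k_*}$ worst case decouples $\Omega_t$ from the sampling. A secondary technical point is threading the RSC constant on $F$ together with the RSS/co-coercivity constant on each $f_{B_i}$ so that the contraction rate lands exactly at $\sqrt{\nu(1-1/\rho_{2k+k_*})}$ and the hypothesis on $k$ translates transparently into $\kappa<1$.
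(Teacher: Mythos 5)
Your proposal is correct and follows essentially the same route as the paper's proof: the $\sqrt{\nu}$ hard-thresholding deviation bound, the triangle-inequality split of $\mathcal{P}_{\Omega_t}(\bm b_t-\bm w_*)$ into a contraction term and a gradient-noise term at $\bm w_*$, the expected contraction via per-block RSS/co-coercivity combined with RSC of $F$ at $\gamma=1/\rho^+_{2k+k_*}$, and unrolling the recursion. The only difference is that you re-derive in place the expected-contraction estimate that the paper imports as a cited lemma from Nguyen et al.; note only that the co-coercivity step should be stated for the \emph{projected} gradient difference $\mathcal{P}_{\Omega_t}(\nabla f_{B_{i_t}}(\bm w_t)-\nabla f_{B_{i_t}}(\bm w_*))$, which is what restricted smoothness actually yields and is all the decomposition requires.
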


Theorem~\ref{thm:StoIHT} shows that Algorithm~\ref{alg:SG-HT} still possibly enjoys linear
convergence up to the {\em tolerance error} $\sigma_{\bw_*}/(1 - \kappa).$ As indicated in Theorem~\ref{thm:StoIHT}, both the convergence rate and the error are depending on the restricted condition number $\rho_{2k+k_*}$. In the next subsection, we will characterize $\rho_{2k+k_*}$ in terms of imbalance ratio $r$. 

\subsection{Estimation of RCS and RSS Conditions}
\label{subsection:Estimation of RCS and RSS Conditions}
In this subsection, we will estimate the  RSC and RSS conditions, and the  condition number for AUC maximization. Combining this with Theorem  \ref{thm:StoIHT}, we will discuss the implications of these estimations, particularly on  the effect of imbalance ratio $r = \frac{n_+}{n}$ on the convergence of \shtauc.  

For our analysis, we assume each $\bm x_i \in \mathbb{R}^d$ are i.i.d Gaussian random vectors from $\mathcal{N}(0,\Sigma)$ with covariance matrix  $\Sigma$ and its diagonal elements  satisfying $ \Sigma_{jj} \leq 1$. We also define a shorthand notation $\lambda = \lambda_{\min}(\Sigma^{1/2}).$ Now we have the following theorem.

\begin{thm}\label{thm:rip}
Consider objective function of AUC maximization given by \eqref{eq:decomposed-objective} and Algorithm \ref{alg:SG-HT} with sparsity level $k<d$. The RCS/RSS condition is satisfied and we have following results:
With probability at least $1 - \exp(-n^+/72) - 2(2k+k_*)/d$, there holds 
\begin{align*}\label{eq:rcs-gaussian}
    \rho^-_{2k+k_*} = & \left(\frac{1}{2}\lambda -  6\sqrt{2}\sqrt{\frac{(2k+k_*)\log{d}}{rn}}\right)^2\\
    & - \frac{32}{3}\frac{(2k+k_*)\log (d)}{rn}.\numberthis
\end{align*} 
With probability at least  $1-(2k+k_*)/2d$, we have  \begin{equation}\label{eq:rss-gaussian}
	    \rho^+_{2k+k_*} = \frac{16(2k+k_*) \log(d)\left(\frac{1}{2}\log(b) + \log(d)\right)}{r}.
	\end{equation}
\end{thm}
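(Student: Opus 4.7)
The plan is to exploit the fact that each $\widetilde f(\bm w; \bm z_i)$ is quadratic in $\bm w$, so both $\nabla^2 F$ and $\nabla^2 f_{B_i}$ are constant matrices (independent of $\bm w$), and the RSC/RSS conditions reduce to, respectively, a restricted minimum and a restricted maximum eigenvalue of these Hessians. Direct differentiation, combined with $n_+/n = r$ and $n_-/n = 1-r$, yields $\nabla^2 F(\bm w) = 2\widehat\Sigma_+ + 2\widehat\Sigma_- + 2(\overline{\bm x}_- - \overline{\bm x}_+)(\overline{\bm x}_- - \overline{\bm x}_+)^\top$, where $\widehat\Sigma_\pm$ denotes the centered sample covariance of the positive/negative class. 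Since the rank-one cross-mean term and $\widehat\Sigma_-$ are PSD, $\nabla^2 F \succeq 2\widehat\Sigma_+$, so for the RSC bound it is enough to lower bound the restricted eigenvalue of $\widehat\Sigma_+$ on $(2k+k_*)$-sparse vectors; an analogous block-level decomposition of $\nabla^2 f_{B_i}$ drives the RSS bound.

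For the RSC estimate \eqref{eq:rcs-gaussian}, I would decompose $\bm v^\top \widehat\Sigma_+ \bm v = \frac{1}{n_+}\|\bm X_+\bm v\|_2^2 - (\bm v^\top \overline{\bm x}_+)^2$, where $\bm X_+$ is the $n_+ \times d$ data matrix of positive samples. The first (uncentered) piece is controlled via the restricted eigenvalue inequality for Gaussian designs of Raskutti--Wainwright--Yu: on an event of probability at least $1 - \exp(-n_+/72)$, for every $\bm v$, $\frac{1}{\sqrt{n_+}}\|\bm X_+\bm v\|_2 \ge \frac{1}{2}\lambda\|\bm v\|_2 - c\sqrt{\log d/n_+}\,\|\bm v\|_1$. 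Restricting to $(2k+k_*)$-sparse $\bm v$ bounds $\|\bm v\|_1 \le \sqrt{2k+k_*}\,\|\bm v\|_2$, and after squaring and using $n_+ = rn$ this produces the $\left(\frac{1}{2}\lambda - 6\sqrt{2}\sqrt{(2k+k_*)\log d/(rn)}\right)^2$ term. For the subtracted piece, $\overline{\bm x}_+ \sim \mathcal{N}(0, \Sigma/n_+)$ coordinate-wise with variance $\le 1/n_+$, so a Gaussian tail plus union bound over coordinates give $\|\overline{\bm x}_+\|_\infty^2 \lesssim \log d/n_+$ on an event of probability $1 - 2(2k+k_*)/d$; combined with the sparse bound $(\bm v^\top \overline{\bm x}_+)^2 \le (2k+k_*)\|\bm v\|_2^2\,\|\overline{\bm x}_+\|_\infty^2$ this supplies the $\frac{32}{3}(2k+k_*)\log d/(rn)$ correction and the stated probability.

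For the RSS estimate \eqref{eq:rss-gaussian}, I would expand $\bm v^\top \nabla^2 f_{B_i}\bm v$ on a support $S$ of size at most $2k+k_*$ as a weighted minibatch sum of squared linear forms $(\bm v^\top(\bm x_j - \overline{\bm x}_\pm))^2$, each bounded via Cauchy--Schwarz by $(2k+k_*)\|\bm v\|_2^2\,\|(\bm x_j - \overline{\bm x}_\pm)_S\|_\infty^2$. Gaussian maximal inequalities, together with a union bound over the $b$ samples in a minibatch and the $\binom{d}{2k+k_*} \le d^{2k+k_*}$ candidate supports, give control of $\max_{j \in B_i,\,\ell} x_{j\ell}^2$ of order $\log d + \tfrac{1}{2}\log b$ on an event of probability at least $1 - (2k+k_*)/(2d)$, and pairing this with the sparsity factor $(2k+k_*)\log d$ from the support union bound yields precisely the combination $\log d\,(\tfrac{1}{2}\log b + \log d)$. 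Since $r \le 1/2$, the positive-class normalization $1/r$ dominates $1/(1-r)$ and accounts for the overall $1/r$; the rank-one mean-difference term $(\overline{\bm x}_- - \overline{\bm x}_+)(\overline{\bm x}_- - \overline{\bm x}_+)^\top$ contributes only a lower-order correction by the same concentration.

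The main obstacle will be the probabilistic bookkeeping on both sides. For RSC, the delicate point is coupling the Raskutti--Wainwright--Yu-type inequality with the mean-correction tail bound on the \emph{same} event so that the constants $6\sqrt{2}$ and $\frac{32}{3}$ come out exactly as stated and the probability budget $\exp(-n_+/72) + 2(2k+k_*)/d$ is respected. For RSS, the trickier step is orchestrating the union bounds over $\binom{d}{2k+k_*}$ supports and the minibatch samples so that one recovers the sharp $\log d\,(\tfrac{1}{2}\log b + \log d)$ factor (rather than a cruder $\log^2(bd)$), while losing only $(2k+k_*)/(2d)$ probability; everything else is routine expansion of the quadratic structure identified in Step 1.
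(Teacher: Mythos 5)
Your proposal is correct in substance and follows the same overall skeleton as the paper's proof: both exploit the quadratic structure so that RSC/RSS reduce to restricted eigenvalue bounds on the constant Hessians, both invoke the Raskutti--Wainwright--Yu restricted eigenvalue inequality for the positive-class Gaussian design to produce the $\left(\frac{1}{2}\lambda - 6\sqrt{2}\sqrt{(2k+k_*)\log d/(rn)}\right)^2$ term, and both control the RSS constant by bounding each rank-one summand of $\nabla^2 f_{B_i}$ through a sparse norm of the corresponding (centered) Gaussian vector with a union bound over the minibatch. The genuine differences are in the handling of the mean terms. You split $\nabla^2 F$ into centered class covariances plus the PSD term $(\overline{\bm x}_- - \overline{\bm x}_+)(\overline{\bm x}_- - \overline{\bm x}_+)^\top$, so the only correction is the subtracted $(\bm v^\top\overline{\bm x}_+)^2$ from centering, which you bound by $\|\bm v\|_1^2\|\overline{\bm x}_+\|_\infty^2$; the paper instead keeps the uncentered Gram matrices $\frac{1}{n_\pm}(\bm X^\pm)^\top\bm X^\pm$ and subtracts the cross term $\overline{\bm x}^+(\overline{\bm x}^-)^\top + \overline{\bm x}^-(\overline{\bm x}^+)^\top$, bounded via $\|\mathcal{H}_k(\overline{\bm x}^\pm)\|_2$ and an ordered-Gaussian-sum lemma of Bellec; the two decompositions are algebraically equivalent and yield corrections of the same order, though the $\ell_\infty$ route is slightly lossier in constants. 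For RSS the paper again uses the Bellec lemma on $\|\mathcal{H}_k(\widetilde{\bm x}_j)\|_2^2$, which handles all $(2k+k_*)$-sparse supports simultaneously, so your union bound over $\binom{d}{2k+k_*}$ supports is unnecessary, and your account of how the product $\log d\,\bigl(\tfrac{1}{2}\log b + \log d\bigr)$ arises from that union bound does not hold up (an $\ell_\infty$ maximal bound over $bd$ Gaussian coordinates gives an additive $\log(bd)$ factor, not a product of logarithms); this is harmless only because a smaller upper bound still certifies the stated, larger value of $\rho^+_{2k+k_*}$ as a valid smoothness constant, but the bookkeeping there should be rewritten to match either the paper's lemma or an honest $\log(bd)$ bound.
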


\textbf{Imbalance ratio on the contraction coefficient.}  
Since we focus on the case of large scale problem, we can assume the number of total examples $n$ is large enough (mainly negative examples $n_-$) such that $\rho_k^-$ is positive. We can write the restricted condition number $\rho_k$ as a function of the imbalance ratio $r$,
\begin{equation}
\label{eq:gaussian-rho}
\rho_k(r) = \frac{16}{ar + b\sqrt{r} + c},
\end{equation}
where the coefficients are $a = \frac{\lambda^2}{4k\log(d)\left(\frac{1}{2}\log(b) + \log(d)\right)}$, $b = -\frac{6\sqrt{2}\lambda}{\sqrt{nk\log(d)\left(\frac{1}{2}\log(b) + \log(d)\right)}}$ and $c = \frac{184}{3n\left(\frac{1}{2}\log(b) + \log(d)\right)}$. The bottom of~\eqref{eq:gaussian-rho} is a concave quadratic function of $\sqrt{r}$ with its minimum attaining at the axis of symmetry: $\sqrt{r_*} = \frac{12\sqrt{2}\sqrt{k\log(d)\left(\frac{1}{2}\log(b) + \log(d)\right)}}{\lambda\sqrt{n}}.$
Since we consider the regime when $n$ (or $n_-$) sufficiently large, i.e. the axis of symmetry is close to $0$. Therefore $\rho_k(r)$ can be regarded as a monotonically decreasing function of $\sqrt{r}$. Recall in Theorem~\ref{thm:StoIHT} equation~\eqref{eq:new kappa of SG-HT}, $\kappa = \Omega(\sqrt{1 - 1/\rho_{2k+k_*}})$ is monotonically increasing with respect to $\rho_{2k+k_*}$. Therefore $\kappa$ is also a monotonically decreasing function of $r$.

\textbf{Imbalance ratio on the tolerance error.} Recall that the step-size in Theorem~\ref{thm:StoIHT} is chosen as $\gamma = 1/\rho_{2k+k_*}^+$. Hence, the tolerance parameter in Theorem~\ref{thm:StoIHT} equation~\eqref{eq:new sigma of SG-HT} is of the form ${\sigma}_{\bm w_*} = \Omega(1/\rho_{2k+k_*}^+)$
Now, combine the discussion on the contraction parameter $\kappa$, the total tolerance error, after simplification, is of the form
\begin{equation}
\label{eq:gaussian-tolerance-error}
\frac{\sigma_{\bm w_*}}{1 - \kappa} \geq \frac{c_1 r}{1 - c_2\sqrt{1 - r}}
\end{equation}
for some constant $c_4$ and $c_5$. See the exact values in the detailed proofs. Combining this with the above discussion on the relation between $r$ and $\kappa$,   we can conclude that \textit{the more imbalance the data is, the slower the convergence is, and the larger tolerance error is}, which matches the empirical experience in the subsequent section.

\section{Experiments}

To validate the effectiveness of our proposed \shtauc and test our theory, we apply it to both synthetic and real-world datasets.


\textbf{Baseline methods.}\footnote{We did not consider methods such as OAM~\cite{Zhao2011} and OPAUC~\cite{gao2013} due to their inferior performance on both run time and AUC score reported in~\cite{ying2016stochastic,liu2018fast}.}  We consider six baseline methods which can be divided into two kinds. The first kind is methods that directly optimize the AUC objective. It includes: \textsc{SOLAM}, a Stochastic OnLine algorithm for AUC Maximization proposed in~\cite{ying2016stochastic}; \textsc{SPAM}-based, a stochastic proximal algorithm for AUC maximization designed in~\cite{Natole2018}. Based on different regularizations, we refer \textsc{SPAM} using $\ell_1$ and $\ell^2$ as \textsc{SPAM}-$\ell_1$, \textsc{SPAM}-$\ell^2$ respectively; \textsc{FSAUC}, a Fast Stochastic algorithm for true \textit{AUC} maximization as proposed in~\cite{liu2018fast}.  The second kind is algorithms that optimize the logistic loss with $\ell_0$-norm constraint. We consider two popular methods of this type including \textsc{StoIHT}, a Stochastic Iterative Hard Thresholding method defined~\cite{Nguyen2017} and \textsc{HSG-HT}, a Hybrid Stochastic Gradient Hard Thresholding~\cite{zhou2018efficient} algorithm. 

\textbf{Evaluation Metrics.} One of the main goals is to testify the effectiveness of optimizing AUC score and the feature selection ability. We use AUC score~\cite{hanley1982meaning} for the classification performance and use F1 score for the feature selection. The F1 score with respect to $\bm w_t$ and $\bm w_*$ is defined as
$$
\operatorname{F1}(\bm w_t, \bm w_*) = \frac{2\operatorname{Pre}(\bm w_t, \bm w_*)\cdot \operatorname{Rec}(\bm w_t, \bm w_*)}{\operatorname{Pre}(\bm w_t, \bm w_*)+\operatorname{Rec}(\bm w_t, \bm w_*)},
$$
where $ 
\operatorname{Pre}(\bm w_t, \bm w_*) = \frac{|\operatorname{supp}(\bm w_*)\cap \operatorname{supp}(\bm w_t)|}{\|\bm w_t\|_0}$ and 
$\operatorname{Rec}(\bm w_t, \bm w_*) = \frac{|\operatorname{supp}(\bm w_*)\cap \operatorname{supp}(\bm w_t)|}{\|\bm w^*\|_0}.    
$ We also use the Jaccard Index as an alternative metric to evaluate the feature selection ability. Jaccard Index (JI) with respect to $\bm w_t$ and $\bm w_*$ is defined as
$$
\operatorname{JI}(\bm w_t, \bm w_*) = \frac{|\operatorname{supp}(\bm w_*)\cap \operatorname{supp}(\bm w_t)|}{|\operatorname{supp}(\bm w_*)\cup \operatorname{supp}(\bm w_t)|}.
$$

\subsection{Synthetic Datasets}
\textbf{Data generation.} We first generate simulation datasets with the data size $n=1000$ and dimension $d=1000$. This simulation is motivated from the task of disease outbreak detection~\cite{arias2011detection}. More specifically, for each of the datasets, each training sample $\bm x \in \mathbb{R}^{d}$ and $y \in \{\pm 1\}$. All entries of each negative sample are from $\mathcal{N}(0,1)$ while each positive training sample is generated according to: 
$\bm x_i\sim \mathcal{N}(\mu,1)$ if  $i \in S$  and $x_i \sim \mathcal{N}(0,1)$ if  $i \notin S.$
Here we fix $\mu=0.3$ and $S$ is  a subset of ``important features'' as the ground truth features. They are randomly selected from $\{0,1,2,\ldots,999\}$ and the size of $S$ is treated as the true sparsity $k_*$. We generate datasets that have different true sparsity $k_* \in \{20:20:80\}$ and different imbalance ratios $r \in \{0.05:0.05:0.5\}$.
   
\textbf{Parameter Tuning.} For \textsc{SOLAM}, it has two parameters, the bound $R\in 10^{[-1:1:5]}$ on $\bm w$ and the initial learning rate $\xi \in [1:9:100]$ as suggested; For the \textsc{SPAM}-$\ell_1$ method, it has $\ell_1$-regularization parameter $\beta_1 \in 10^{[-5:1:2]}$; Similarly, \textsc{SPAM}-$\ell^2$ has the $\ell^2$-regularization parameter $\beta_2 \in 10^{[-5:1:2]}$ or both. The initial learning rate $\xi$ of \textsc{SPAM}-$\ell_1$ and \textsc{SPAM}-$\ell_2$ is the same as $\xi$ in \textsc{SOLAM}; For \textsc{FSAUC}, the initial step size $\eta_1 $ is tuned from $ 2^{[-10:1:10]}$ and the bound parameter $R$ of $\bm w$ is the same as \textsc{SOLAM}. For three non-convex methods, the sparsity parameter $k$ of $\shtauc$, \textsc{StoIHT}, and \textsc{HSG-HT} is tuned from $k \in \{10:10:100\}$. The number of blocks of $\shtauc$ and \textsc{StoIHT} is from $\{1,2,4,8,10\}$.

\begin{table}[!ht]
\centering
\small
\begin{tabular}{@{\hskip1pt}c@{\hskip1pt}|c|c|c|c}
\hline\hline
 & $k_*=20$ & $k_*=40$ & $k_*=60$ & $k_*=80$\\
\hline 
\shtauc & .551$\pm$.107 & \textbf{.675$\pm$.068} & \textbf{.766$\pm$.074} & \textbf{.820$\pm$.061} \\\hline
SPAM-$\displaystyle \ell ^{1}$ & .560$\pm$.087 & .621$\pm$.094 & .697$\pm$.118 & .763$\pm$.128 \\\hline
SPAM-$\displaystyle \ell ^{2}$ & .537$\pm$.095 & .597$\pm$.110 & .653$\pm$.141 & .752$\pm$.135 \\\hline
FSAUC & \textbf{.571$\pm$.107} & .654$\pm$.083 & .754$\pm$.079 & .820$\pm$.071 \\\hline
SOLAM & .523$\pm$.102 & .628$\pm$.077 & .732$\pm$.092 & .740$\pm$.139 \\\hline
StoIHT & .538$\pm$.096 & .604$\pm$.087 & .659$\pm$.091 & .719$\pm$.092 \\\hline
HSG-HT & .484$\pm$.094 & .593$\pm$.089 & .661$\pm$.116 & .759$\pm$.083 \\\hline
\end{tabular}\caption{Averaged AUC on four synthetic datasets.}
\label{tab:auc}
\end{table}

\begin{table*}[!ht]
\centering
\small
\begin{tabular}{m{0.08\textwidth}|m{0.08\textwidth}|m{0.08\textwidth}|m{0.08\textwidth}|m{0.08\textwidth}|m{0.08\textwidth}|m{0.08\textwidth}|m{0.08\textwidth}|m{0.08\textwidth}}
\hline\hline
& \multicolumn{4}{c}{F1 score} & \multicolumn{4}{c|}{Jaccard Index}\\
\hline 
  & $k_*=20$ & $k_*=40$ & $k_*=60$ & $k_*=80$ & $k_*=20$ & $k_*=40$ & $k_*=60$ & $k_*=80$ \\
\hline 
\shtauc & \textbf{.209$\pm$.046} & \textbf{.365$\pm$.078} & \textbf{.382$\pm$.053} & \textbf{.450$\pm$.072} & \textbf{.126$\pm$.047} & \textbf{.200$\pm$.043} & \textbf{.275$\pm$.051} & \textbf{.311$\pm$.032} \\\hline
SPAM-$\displaystyle \ell ^{1}$ & .058$\pm$.053 & .182$\pm$.137 & .147$\pm$.102 & .177$\pm$.126 & .028$\pm$.040 & .087$\pm$.076 & .078$\pm$.060 & .101$\pm$.059 \\\hline
SPAM-$\displaystyle \ell ^{2}$ & .037$\pm$.019 & .060$\pm$.053 & .100$\pm$.085 & .159$\pm$.065 & .017$\pm$.021 & .029$\pm$.020 & .040$\pm$.034 & .065$\pm$.033 \\\hline
FSAUC & .100$\pm$.075 & .202$\pm$.114 & .222$\pm$.096 & .320$\pm$.102 & .037$\pm$.033 & .117$\pm$.071 & .146$\pm$.060 & .210$\pm$.069 \\\hline
SOLAM & .044$\pm$.021 & .088$\pm$.039 & .125$\pm$.064 & .171$\pm$.064 & .024$\pm$.029 & .049$\pm$.032 & .073$\pm$.027 & .100$\pm$.072 \\\hline
StoIHT & .089$\pm$.037 & .163$\pm$.069 & .231$\pm$.054 & .237$\pm$.067 & .051$\pm$.033 & .093$\pm$.028 & .122$\pm$.036 & .146$\pm$.045 \\\hline
HSG-HT & .089$\pm$.042 & .157$\pm$.076 & .228$\pm$.061 & .249$\pm$.066 & .046$\pm$.037 & .096$\pm$.031 & .127$\pm$.049 & .171$\pm$.042\\\hline
\end{tabular}\caption{Averaged F1 scores and Jaccard Index on four synthetic datasets.}
\label{tab:f1-score-jaccard-index}
\end{table*}

{\bf Generalization Performance and Feature Selection.} Table~\ref{tab:auc} reports the averaged AUC of four datasets with imbalanced ratio $r=0.05$\footnote{AUC scores are calculated on testing dataset. We found that \textsc{SPAM}-based, \textsc{SOLAM} and \textsc{FSAUC} do not produce sparse solutions. Instead, we truncate all entries in $\bm w_t$ to 0 if the magnitude of these entries are not larger than $0.001$.}. First of all, \shtauc gives the best AUC score for $k_* = 40, 60, 80$ and gives competitive AUC score when $k_*=20$. In fact the AUC scores of \shtauc and \textsc{FSAUC} are competitive with each other. One of the reasons could be that both of them have a sparse projection at each iteration. Secondly, the AUC scores of \textsc{SPAM}-$\ell^2$ and \textsc{SOLAM} are inferior to \shtauc, \textsc{SPAM}-$\ell_1$, and \textsc{FSAUC}. This is because these two are not sparse-inducing methods hence not suitable for sparse learning problem. Last but not least, $\ell_0$-based methods including \textsc{StoIHT} and \textsc{HSG-HT} have lower AUC scores since these two are not for AUC optimization. It shows that our algorithm generalizes well by solving~\eqref{eq:sparse-auc} when the ground truth $w_*$ is sparse. 

Table~\ref{tab:f1-score-jaccard-index} reports the average F1 score and Jaccard Index of four datasets with imbalanced ratio $r=0.05$. In both metric, our method \shtauc are significantly better than any other algorithms. This impact is two-fold. Firstly, \shtauc is better than other $\ell_1$ based stochastic AUC maximization algorithms. This is consistent with the fact $\ell_1$ based stochastic  algorithms may be hard to preserve a truly sparse solution~\cite{duchi2009efficient,langford2009sparse,xiao2010dual}. And it shows the advantage of using $\ell_0$ based stochastic algorithm as \shtauc. Secondly, \shtauc is better than $\textsc{StoIHT}$ and $\textsc{HSG-HT}$. The advantage of directly optimizing AUC compared with using Empirical Risk Minimization, i.e. logistic loss, when the dataset is imbalanced can also be found in~\cite{cortes2004auc}. Our findings prove this well. 

In summary, the simulation results indicate that \shtauc has better tradeoff between the AUC optimization and feature selection among all methods when the data is imbalanced and the ground truth is sparse.

{\bf Effect of Imbalanced Ratio on Convergence and Performance.} To demonstrate the impact of imbalance ratio $r$ on the convergence of \shtauc, we apply \shtauc on datasets with different imbalance ratios $r=0.05, 0.25, 0.50$. Figure~\ref{fig:conv} reports the number of epochs against the AUC score, with sparsity level $k_* = 20, 40, 60, 80$ and fix $k = k_*$ and batch size $b = 50$. Note the AUC scores are scaled in order to get better visualization. We can observe that when $r = 0.5$ the \shtauc converges after 150 epochs, but when $r=0.05$, \shtauc does not converge even after 300 epochs. This results proves our theoretical analysis in Section~\ref{subsection:Estimation of RCS and RSS Conditions}, i.e. when data is more imbalanced, the convergence is slower. It also matches ones empirical expectation.

\begin{figure}[ht!]
\centering
\subfloat[$k_*=20$\label{fig:conv-20}]{\includegraphics[width=0.5\linewidth]{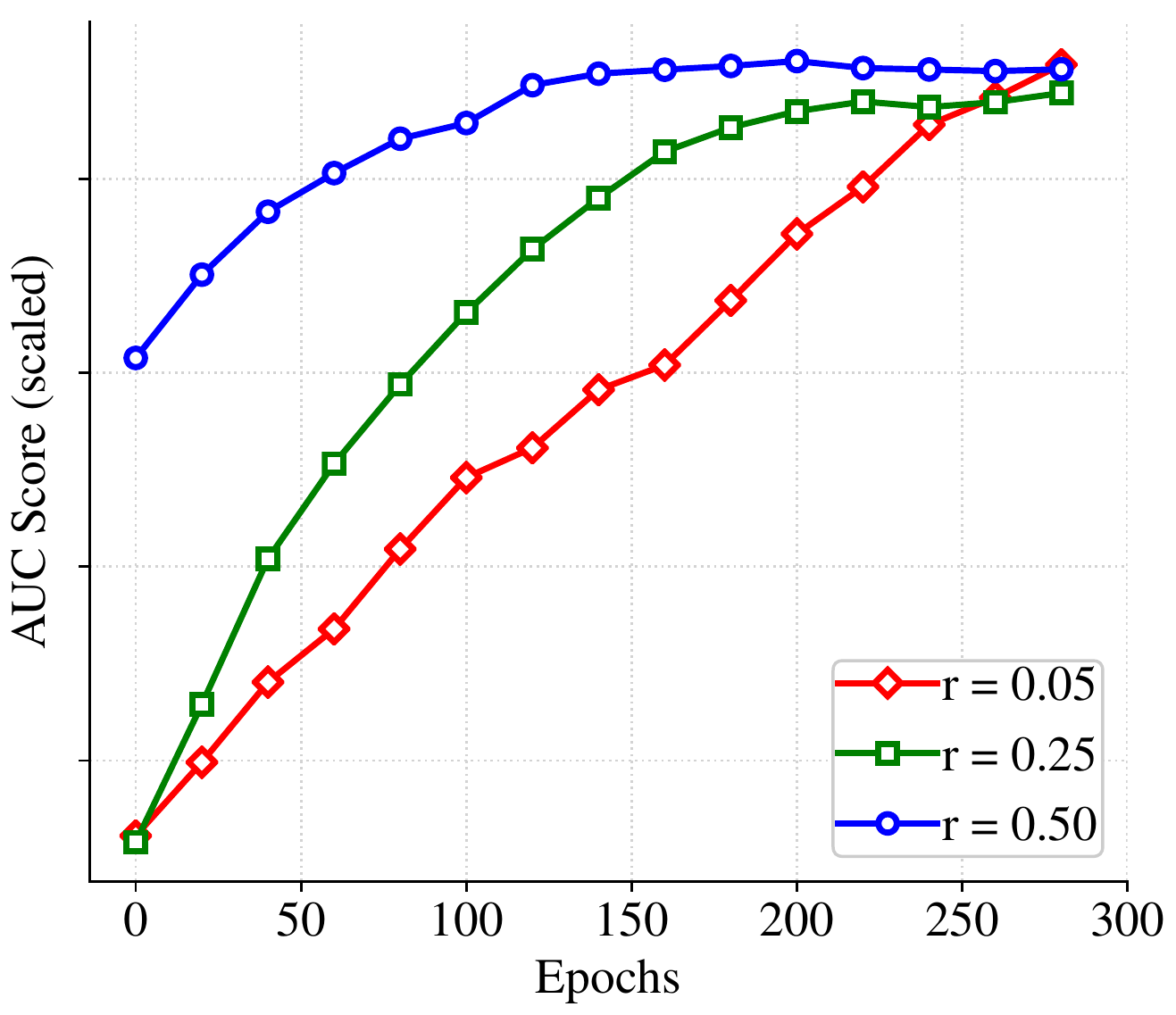}}
\hfill
\subfloat[$k_*=40$\label{fig:conv-40}]{\includegraphics[width=0.5\linewidth]{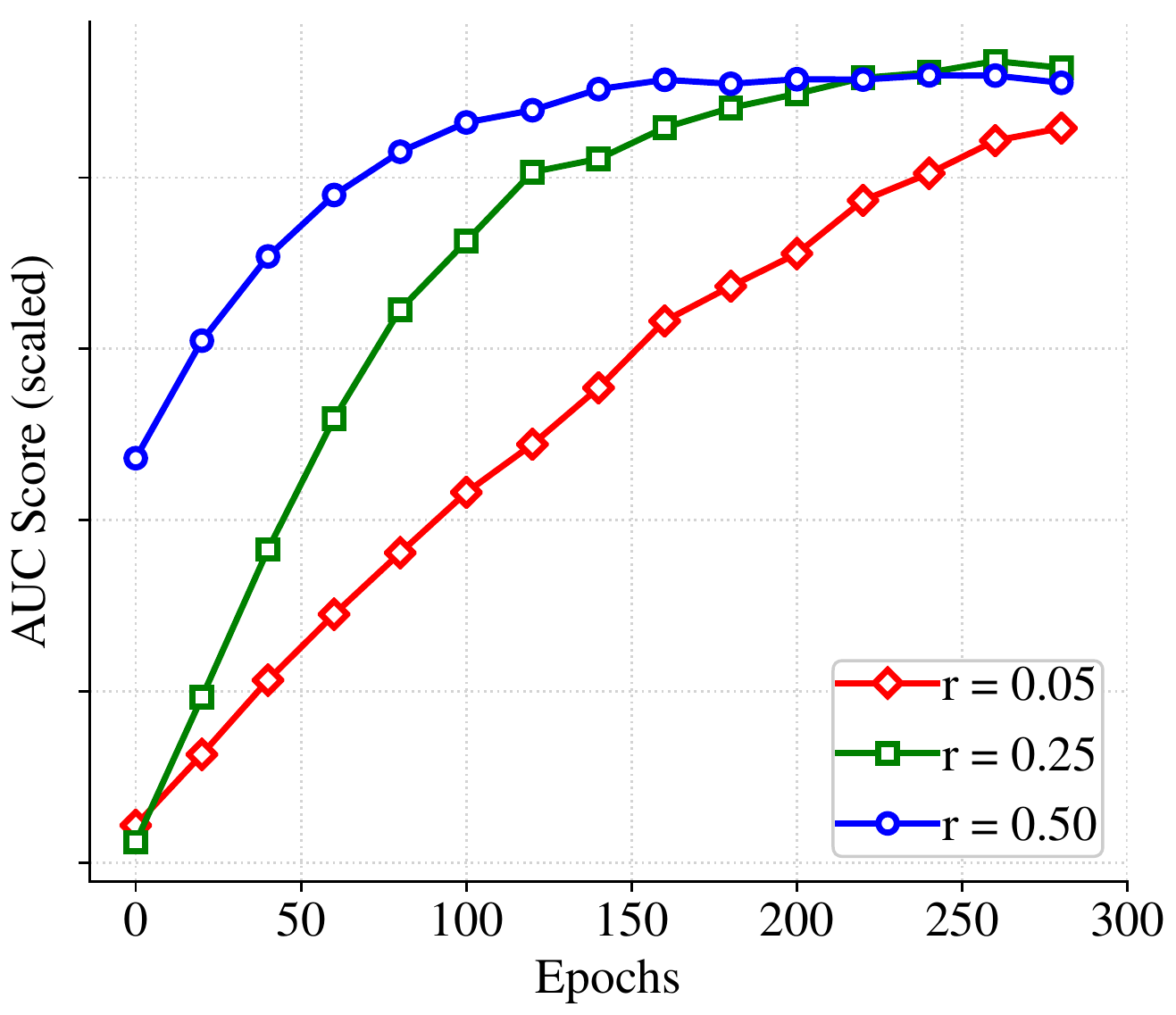}}\\
\subfloat[$k_*=60$\label{fig:conv-60}]{\includegraphics[width=0.5\linewidth]{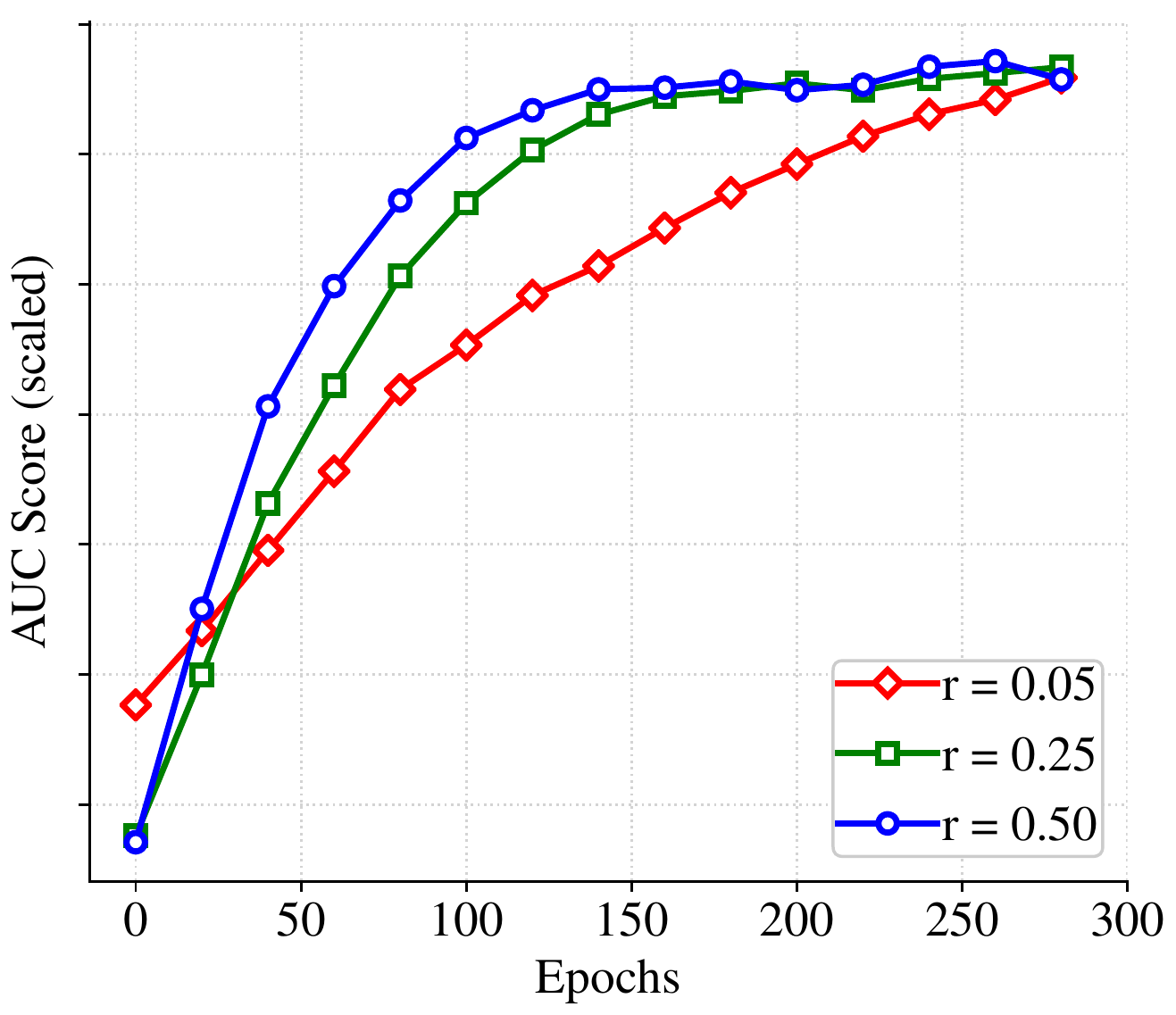}}
\hfill
\subfloat[$k_*=80$\label{fig:conv-80}]{\includegraphics[width=0.5\linewidth]{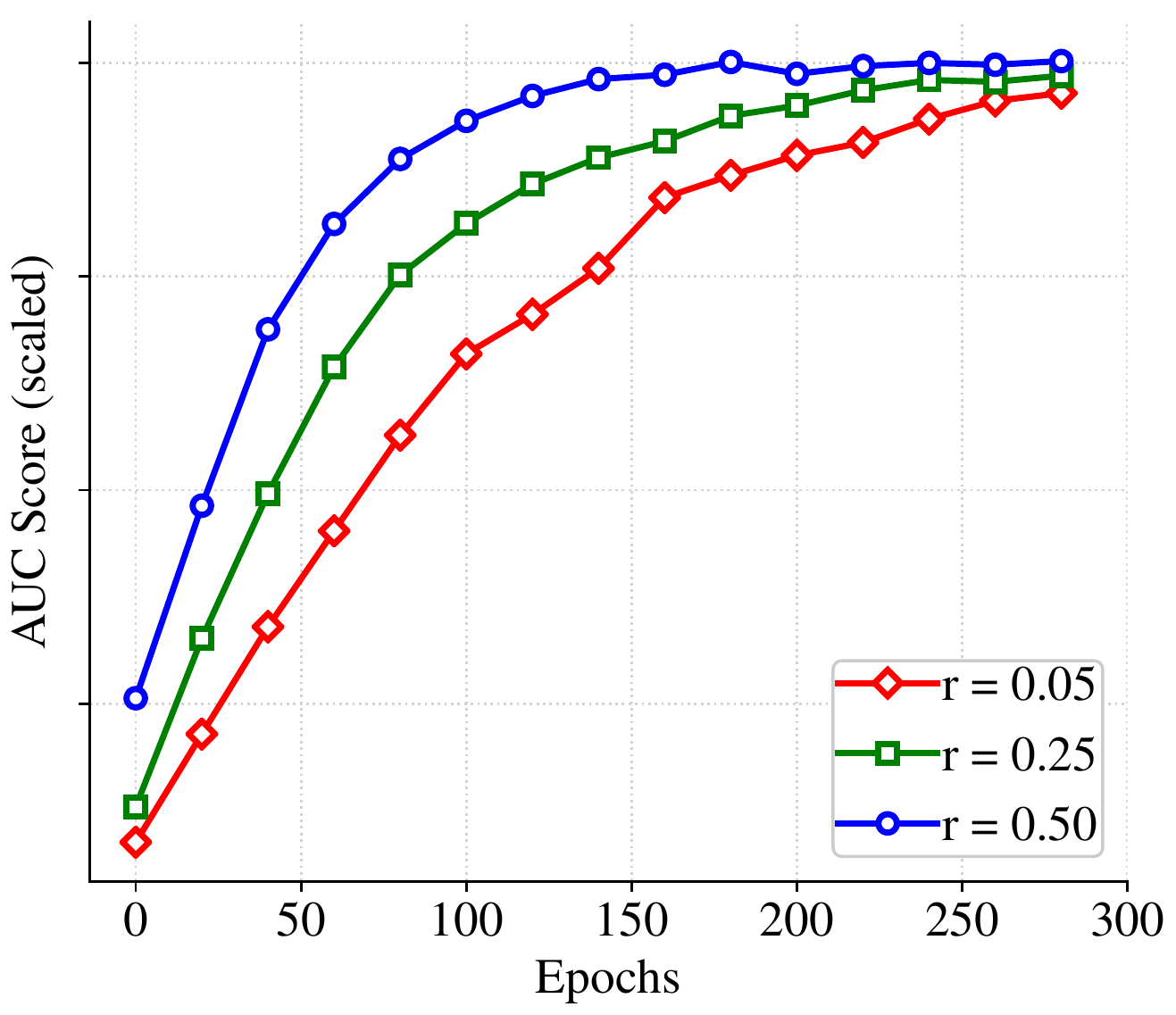}}
\caption{Convergence plot with different imbalance ratio $r$.}
\label{fig:conv}
\end{figure}

To further investigate how the imbalance ratio $r$ affects the performance, we apply all methods with different imbalance ratios on $k_*=20$ dataset.  As shown in Figure~\ref{fig:auc-score-pr}, for \shtauc, the more data is imbalanced, the worse the AUC and F1 scores are. This again proves our thereotical analysis in Section~\ref{subsection:Estimation of RCS and RSS Conditions}. Other AUC maximization algorithms also show the same phenomenon, but they are lack of similar analysis on imbalance ratio. Moreover, the performance of \shtauc,\textsc{SPAM}-$\ell_1$,\textsc{SPAM}-$\ell_1/\ell^2$, and \textsc{FSAUC} are at the same tier. The results of \textsc{SPAM}-$\ell^2$ and \textsc{SOLAM} are inferior to the hard thresholding-based and $\ell_1$ based methods. The reason is these two methods do not explore sparsity. More interestingly, compared with the methods (i.e. \textsc{StoIHT} and \textsc{HSG-HT}) for Empirical Risk Minimization, the AUC optimization-based methods outperform these two by a large margin when the dataset is more imbalanced. This testifies that minimizing the empirical risk loss may not lead to the best possible AUC values as stated in~\cite{cortes2004auc}.

\begin{figure}[ht]
    \centering
    \includegraphics[width=\linewidth]{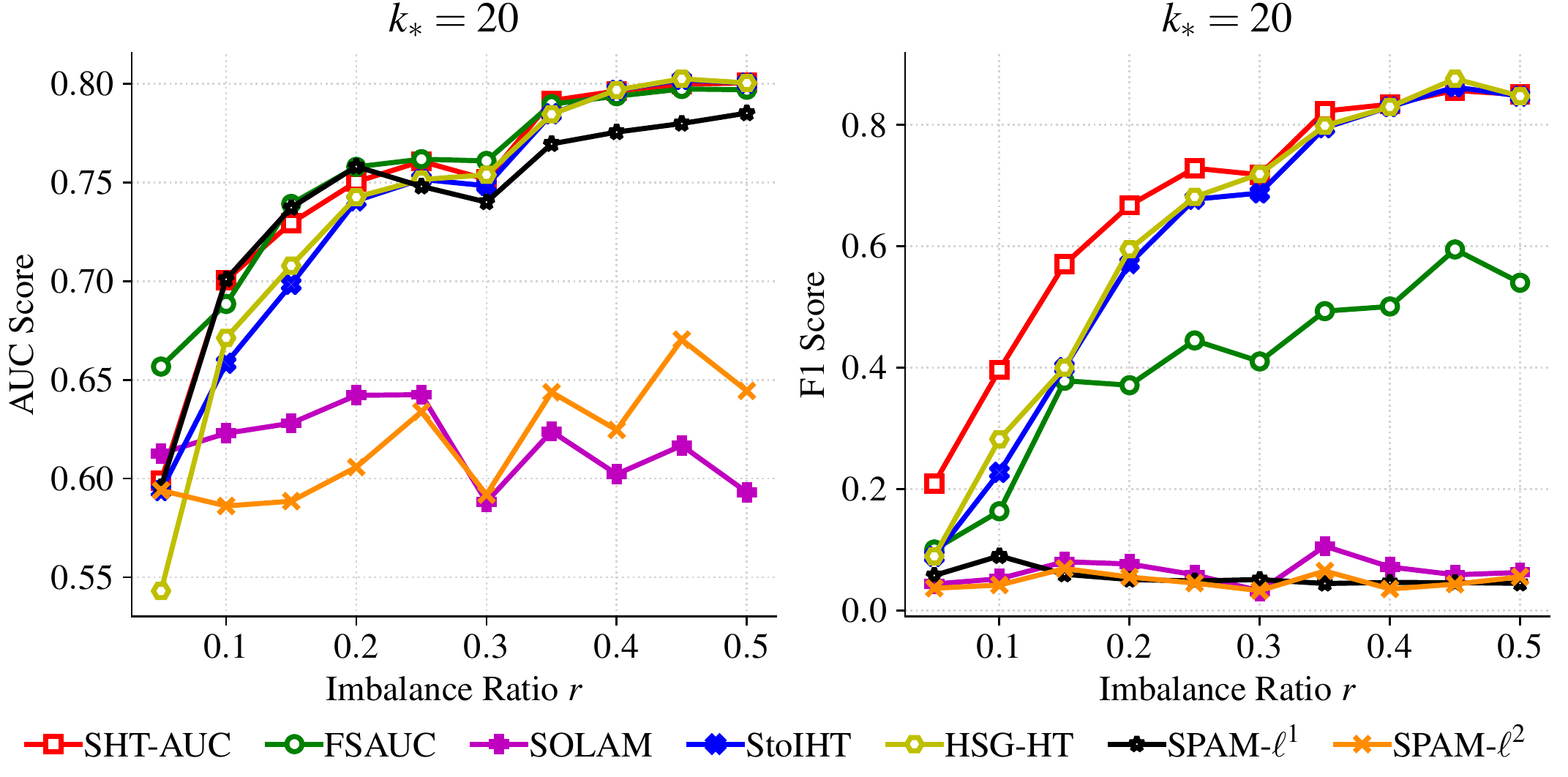}
    \caption{The left: AUC score as a function of the imbalance ratio $r$. The right: F1 score as a function of the imbalance ratio $r$.}
    \label{fig:auc-score-pr}\vspace*{-3mm}
\end{figure}


\subsection{Gene identification on two real-world datasets}

We test our method on two real-world high-dimensional datasets, the leukemia dataset~\cite{Golub1999un} and the colon cancer dataset~\cite{Alon1999dy}. The leukemia dataset consists of 72 samples where each positive sample (47 in total) is a patient has acute lymphoblastic leukemia and each negative sample (25 in total) is a patient has acute myeloid leukemia. Each training example has 7,129 genes. The colon cancer dataset has 62 training samples with 40 positive samples (patients who have tumor tissues) and 22 negative (patients who are normal). Each training sample consists 2,000 gene markers. Our goal is to classify these patients at the same time to select genes related with these two disease. As shown in Table~\ref{tab:myeloid-datasets} and~\ref{tab:colon-cancer}, we choose a subset of ground truth of cancer related genes from~\cite{agarwal2009ranking} and compare the gene selection ability of different methods.

\textbf{Parameter Tuning.} For $\shtauc$, \textsc{StoIHT}, and \textsc{HSG-HT}, the sparsity parameter $k$ is tuned from $\{1, 5, 10, \ldots,50, 60, \ldots,100, 200, \ldots,500\}$. For \textsc{SPAM}-$\ell^1$ and \textsc{SPAM}-$\ell^1/\ell^2$, we choose the $\ell_1$-regularization parameter $\lambda_{\ell_1}\in [0.07,0.00001]$ such that models are from sparsest models to dense models. \textsc{FSAUC} has a parameter $R$ to control $\ell_1$ ball, we choose $R\in [0.00001, 10000]$ such that models are from sparsest models to dense models too. \textsc{SPAM}-$\ell_2$ and \textsc{SOLAM} are two non-sparse methods. We randomly shuffle the dataset 20 times which form 20 trials. For each trial, we use 5-fold cross-validation to train all methods. The block size $b$ of \shtauc and \textsc{StoIHT} is tuned from 1 to 40 and sparsity $k$ is from 5 to 500.

\begin{table}[ht!]
\centering
\begin{tabular}{c|c|c}
\hline\hline
 & Colon Cancer & Leukemia \\\hline
\shtauc & \textbf{.8777 $\pm$ .1114} & \textbf{.9963$\pm$.0098} \\\hline
SPAM-$\displaystyle \ell ^{1}$ & .8409 $\pm$ .1646 & .9812$\pm$.0602  \\\hline
SPAM-$\displaystyle \ell ^{2}$ & .8304 $\pm$ .1478 & .9812$\pm$.0604 \\\hline
FSAUC & .7907 $\pm$ .2143 & .9708$\pm$.0730 \\\hline
SOLAM & .8089 $\pm$ .1752 & .9751$\pm$.0773 \\\hline
StoIHT & .8647 $\pm$ .1339 & .9947$\pm$.0138 \\\hline
HSG-HT & .8759 $\pm$ .1246 & .9898$\pm$.0218 \\\hline
\end{tabular}\caption{Average AUC score on real datasets}
\label{table:real-auc}
\end{table}

\begin{figure}[ht!]
\centering
\subfloat[Colon Cancer Dataset\label{fig:colon_auc}]{\includegraphics[width=.5\linewidth]{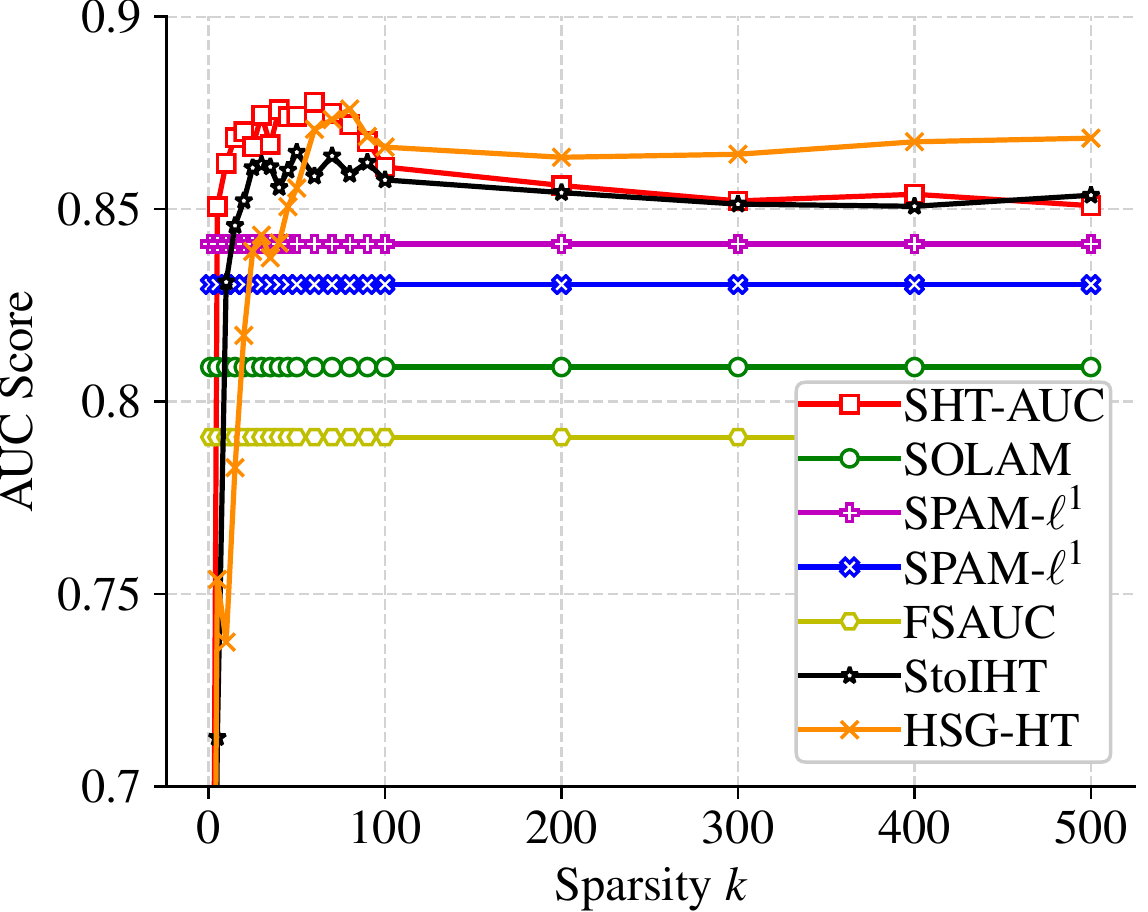}}
\subfloat[Leukemia Dataset\label{fig:leukemia_auc}]{\includegraphics[width=.5\linewidth]{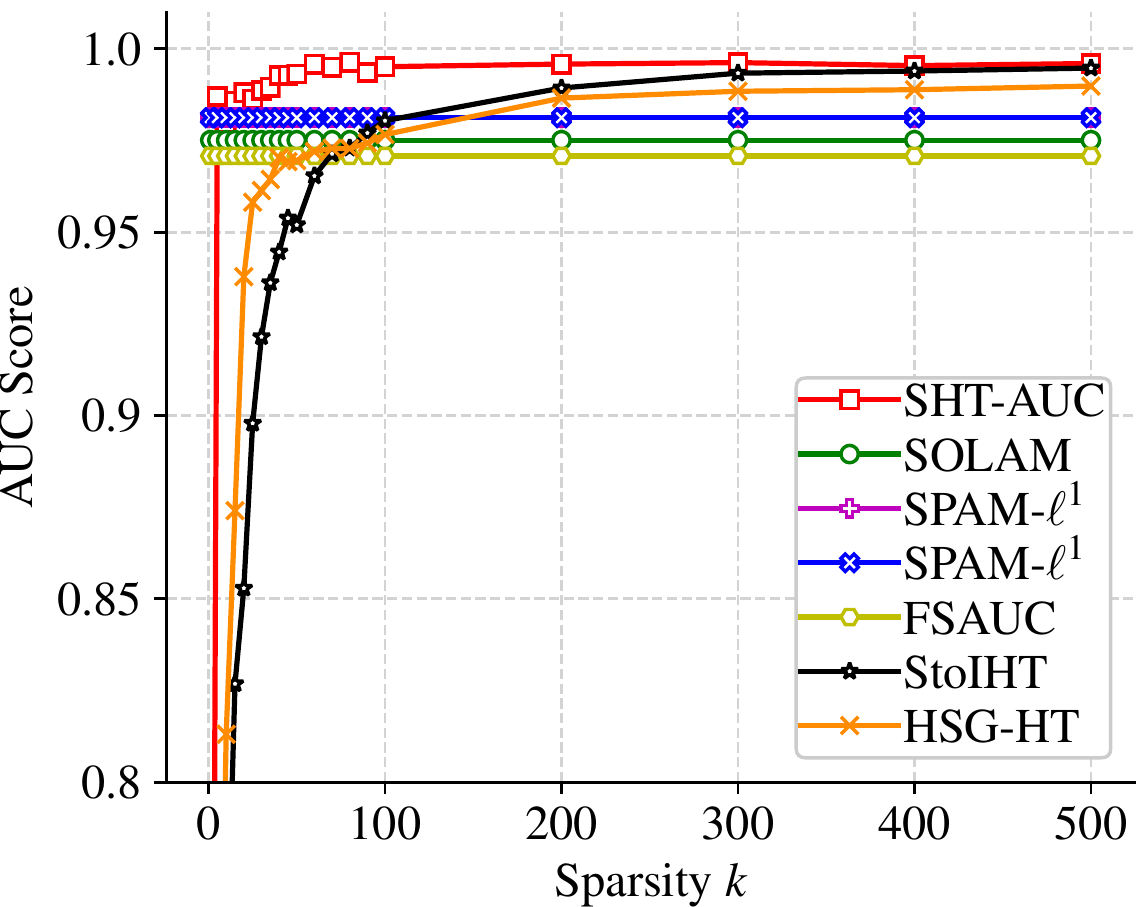}}
\caption{AUC score as a function of sparsity $k$.}
\label{fig:real:auc}
\end{figure}

\begin{table*}[ht!]
    \centering
    \begin{tabular}{p{0.03\textwidth}|p{0.4\textwidth}|p{0.03\textwidth}|p{0.5\textwidth}}\hline 
\rowcolor{lightgray}ID & Marker & ID & Marker\\\hline 
1 & Myeloperoxidase & 17 & Probable protein disulfide isomerase ER-60 precursor  \\\hline 
2 & CD13 & 18 & CD34 \\\hline 
3 & CD33 & 19 & CD24 \\\hline 
4 & HOXA9 Homeo box A9 & 20 &  60S ribosomal protein L23 \\\hline 
5 & MYBL2 & 21 & 5-aminolevulinic acid synthase \\\hline 
6 & CD19 & 22 &  HLA class II histocompatibility antigen \\\hline  
7 & CD10 (CALLA) & 23 &  Epstein-Barr virus small RNA-associated protein \\\hline 
8 & TCL1 (T cell leukemia) & 24 &  HNRPA1 Heterogeneous nuclear ribonucleoprotein A1 \\\hline
9 & C-myb &  25 & Azurocidin \\\hline 
10 & Deoxyhypusine synthase & 26 & Red cell anion exchanger (EPB3, AE1, Band 3)  \\\hline 
11 & KIAA0220  & 27 &  Topoisomerase II beta \\\hline 
12 & G-gamma globin & 28 & Probable G protein-coupled receptor LCR1 homolog \\\hline
13 &  Delta-globin  & 29 & Int-6 \\\hline 
14 & Brain-expressed HHCPA78 homolog & 30 & Alpha-tubulin \\\hline 
15 &  Myeloperoxidase & 31 & Terminal transferase \\\hline 
16 & NPM1 Nucleophosmin & 32 & Glycophorin B precursor \\\hline 
\end{tabular}
\caption{Markers related with acute myeloid leukemia and acute lymphoblastic leukemia.}
\label{tab:myeloid-datasets}
\end{table*}

\begin{table*}[ht!]
\centering
\begin{tabular}{p{0.03\textwidth}|p{0.47\textwidth}|p{0.03\textwidth}|p{0.4\textwidth}}\hline 
 \rowcolor{lightgray} ID & Marker                    & ID & Marker \\\hline 
 1 & Phospholipase A2           & 16 &  Splicing factor (CC1.4) \\\hline
 2 & Keratin 6 isoform          & 17 &  Nucleolar protein (B23) \\\hline
 3 & Protein-tyrosine phosphatase PTP-H1 & 18 & Lactate dehydrogenase-A (LDH-A) \\\hline
 4 & Transcription factor IIIA  & 19 & Guanine nucleotide-binding protein G(OLF) \\\hline 
 5 & Viral (v-raf) oncogene homolog 1 & 20 & LI-cadherin \\\hline 
 6 & Dual specificity mitogen-activated protein kinase kinase 1 & 21 & Lysozyme \\\hline 
 7 & Transmembrane carcinoembryonic antigen & 22 & Prolyl 4-hydroxylase (P4HB) \\\hline 
 8 & Oncoprotein 18 & 23 & Eukaryotic initiation factor 4AII \\\hline 
 9 & Phosphoenolpyruvate carboxykinase & 24 & Interferon-inducible protein 1-8D  \\\hline 
 10 & Extracellular signal-regulated kinase 1 & 25 & Dipeptidase \\\hline 
 11 &  26 kDa cell surface protein TAPA-1 & 26 & Heat shock 27 kDa protein \\\hline 
 12 & Id1 & 27 & Tyrosine-protein kinase receptor TIE-1 precursor \\\hline 
 13 & Interferon-inducible protein 9-27 & 28 & Mitochondrial matrix protein P1 precursor \\\hline 
 14 & Nonspecific crossreacting antigen & 29 & Eukaryotic initiation factor EIF-4A homolog \\ \hline
 15 & cAMP response element regulatory protein (CREB2) & & \\\hline 
\end{tabular}
\caption{Markers related with colon cancer as shown in~\cite{agarwal2009ranking}}
\label{tab:colon-cancer}
\end{table*}

\textbf{Generalization Performance.} We compare our method \shtauc on AUC score with seven baseline methods on colon cancer dataset. As shown in Table~\ref{table:real-auc}, our algorithm \shtauc has highest AUC score among all methods. It again shows the advantage of our algorithm in maximizing AUC with sparsity constraint. Interestingly, Empirical Risk Minimization algorithms \textsc{StoIHT} and \textsc{HSG-HT} also give comparable AUC scores. The explanation is that both datasets are not severely imbalanced. The performance of $\textsc{SPAM}$, $\textsc{FSAUC}$ and $\textsc{SOLAM}$ are second tier. The results show the advantage of $\ell_0$-based over $\ell_1$-based stochastic algorithms in AUC maximization.

Figure~\ref{fig:real:auc} shows the AUC score against the relaxed sparsity level $k$\footnote{\textsc{SOLAM}, \textsc{FSAUC} and \textsc{SPAM} are drawn as constant lines with best performance}. In Figure~\ref{fig:colon_auc}, \shtauc, \textsc{StoIHT} and \textsc{HSG-HT} reach their highest AUC scores when $k$ is moderately larger than $k_*$. This matches the condition in Theorem~\ref{thm:StoIHT}. In Figure~\ref{fig:leukemia_auc}, the AUC scores are saturated after certain $k$ with respect to different algorithms. The reason is Leukemia dataset has much higher dimension than Colon Cancer dataset. Hence $k$ is still in a reasonable range, and it still matches the condition in Theorem~\ref{thm:StoIHT}.

\begin{figure}[ht!]
\subfloat[Colon Cancer Dataset\label{fig:colon_feature}]{\includegraphics[width=0.5\linewidth]{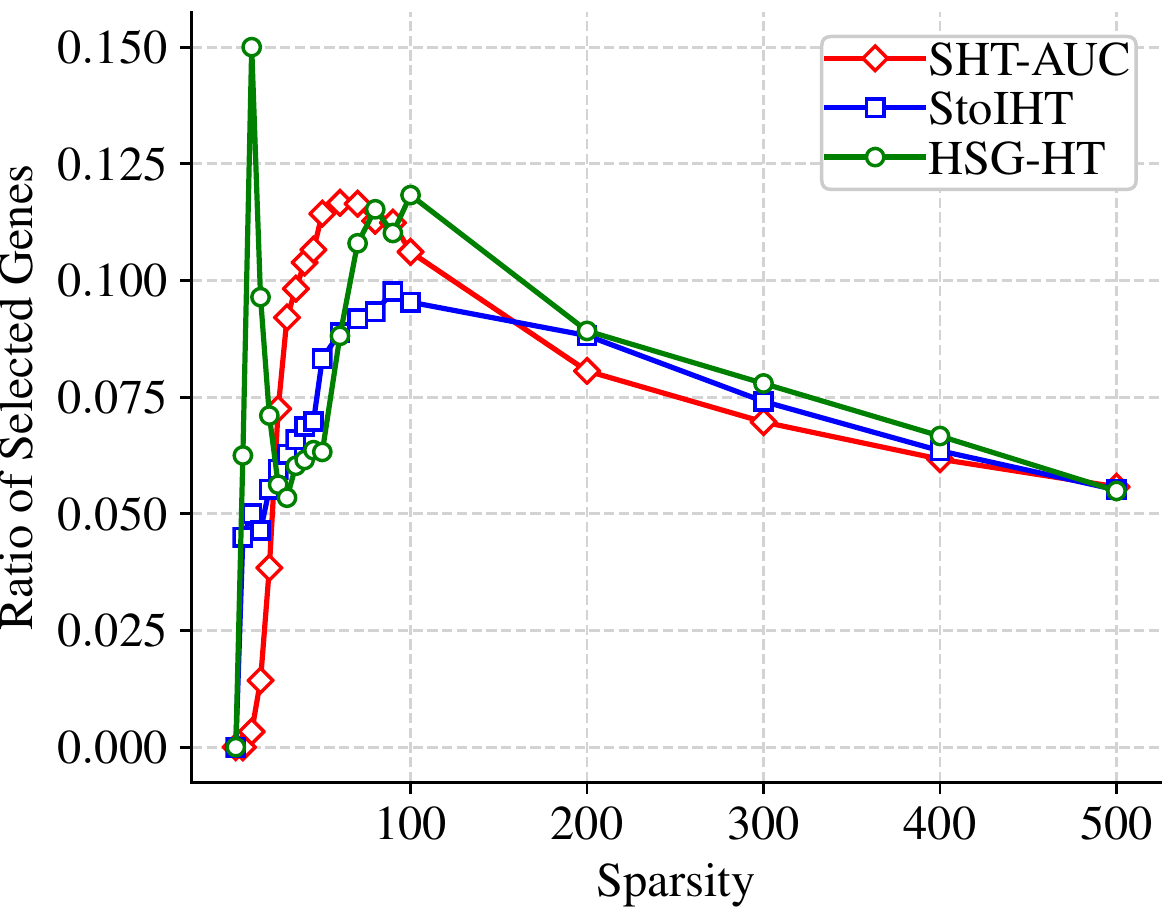}}
\hfill
\subfloat[Leukemia Dataset\label{fig:leukemia_feature}]{\includegraphics[width=0.5\linewidth]{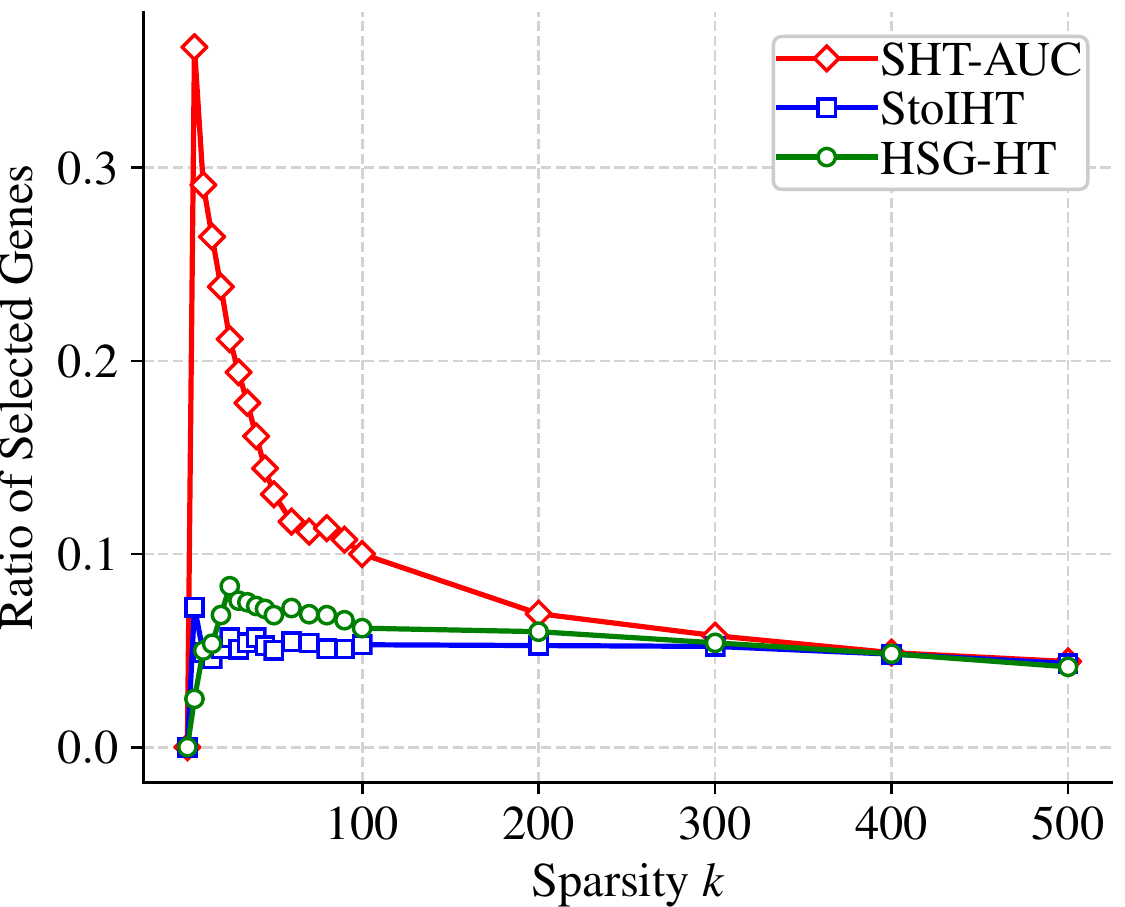}}
\caption{Percentage of related genes as a function as sparsity $k$ on the colon cancer and leukemia dataset. The ratio of selected genes is defined by the number of selected cancer-related genes divided by total number of genes found (corresponding to total number of non-zeros).}
\label{fig:real:feature}\vspace*{-3mm}
\end{figure}

\textbf{Feature Selection Ability.} To further investigate how is the gene selection ability for different methods on these datasets, we measure the gene selection ability as the ratio of related genes selected: the number of genes overlapped with the genes defined in Table~\ref{tab:myeloid-datasets} and~\ref{tab:colon-cancer} divided by total number of genes found. i.e. let $\bm w_t$ be the algorithm output and $\bm w_*$ be the groundtruth, the ratio is defined as
$$
\operatorname{Ratio}(\bm w_t, \bm w_*) = \frac{|\operatorname{supp}(\bm w_*)\cap \operatorname{supp}(\bm w_t)|}{|\operatorname{supp}(\bm w_*)|}.
$$
 We report our results in Figure~\ref{fig:real:feature}. In Figure~\ref{fig:colon_feature}, \textsc{HSG-HT} returns the best percentage when the sparsity is $k=5$, but when $k=29$, which is the number of related genes, \shtauc can achieve the highest percentage of related genes. Hence \shtauc and \textsc{HSG-HT} are comparable in general. This might due to the fact the colon cancer dataset is a relatively balanced balanced dataset, with $r = 0.355$. In Figure~\ref{fig:leukemia_feature}, \shtauc recovers the best percentage also when sparsity is the number of related genes, i.e. $k = 32$, and significantly outperforms \textsc{HSG-HT} and \textsc{StoIHT}. This phenomenon highlights the advantage of maximizing AUC rather than accuracy under imbalanced classification setting. 
 
 In summary, \shtauc enjoys better generalization performance on real-world high-dimensional datasets, while maintain a more robust feature selection ability against state-of-art algorithms.

\section{Conclusion}

In this paper, we proposed stochastic hard thresholding algorithm for AUC maximization with sparse $\ell_0$ contraints in imbalanced classification. In particular, we formulated the U-statistic objective function of AUC maximization as an ERM objective function. This new reformulation facilitated the design of stochastic hard thresholding algorithm for AUC maximization. The proposed algorithm, $\shtauc$, enjoys a cheap $\O(bd)$ per-iteration cost, making it amenable for high-dimensional data analysis. We proved that under RCS/RSS conditions, $\shtauc$ enjoys a linear convergence rate up to a tolerance error. We also showed, under Gaussian assumptions on the data, the RCS/RSS conditions can be satisfied and how the convergence rate and tolerance error are affected by the imbalance ratio. Our experiments validated our theoretical findings while $\shtauc$ is shown to have a very good property in feature selection against state-of-the-art algorithms. 

\section{Acknowledgement}

This work is supported by NSF IIS-1816227 and IIS-2008532. The work of Yunwen Lei is supported by the National Natural Science Foundation of China (Grant Nos. 61806091).

\bibliographystyle{IEEEtran}
\bibliography{reference}

\begin{thebibliography}{10}
\providecommand{\url}[1]{#1}
\csname url@samestyle\endcsname
\providecommand{\newblock}{\relax}
\providecommand{\bibinfo}[2]{#2}
\providecommand{\BIBentrySTDinterwordspacing}{\spaceskip=0pt\relax}
\providecommand{\BIBentryALTinterwordstretchfactor}{4}
\providecommand{\BIBentryALTinterwordspacing}{\spaceskip=\fontdimen2\font plus
\BIBentryALTinterwordstretchfactor\fontdimen3\font minus
  \fontdimen4\font\relax}
\providecommand{\BIBforeignlanguage}[2]{{%
\expandafter\ifx\csname l@#1\endcsname\relax
\typeout{** WARNING: IEEEtran.bst: No hyphenation pattern has been}%
\typeout{** loaded for the language `#1'. Using the pattern for}%
\typeout{** the default language instead.}%
\else
\language=\csname l@#1\endcsname
\fi
#2}}
\providecommand{\BIBdecl}{\relax}
\BIBdecl

\bibitem{bradley1997use}
A.~Bradley, ``The use of the area under the {ROC} curve in the evaluation of
  machine learning algorithms,'' \emph{Pattern recognition}, vol.~30, no.~7,
  pp. 1145--1159, 1997.

\bibitem{fawcett2006introduction}
T.~Fawcett, ``An introduction to {ROC} analysis,'' \emph{Pattern recognition
  letters}, vol.~27, no.~8, pp. 861--874, 2006.

\bibitem{hanley1982meaning}
J.~A. Hanley and B.~J. McNeil, ``The meaning and use of the area under a
  receiver operating characteristic (roc) curve.'' \emph{Radiology}, vol. 143,
  no.~1, pp. 29--36, 1982.

\bibitem{Huang2003}
J.~{Huang}, J.~{Lu}, and C.~X. {Ling}, ``Comparing naive bayes, decision trees,
  and svm with auc and accuracy,'' in \emph{Third IEEE International Conference
  on Data Mining}, 2003, pp. 553--556.

\bibitem{Joachims}
T.~Joachims, ``A support vector method for multivariate performance measures,''
  in \emph{International Conference on Machine Learning}.\hskip 1em plus 0.5em
  minus 0.4em\relax ACM, 2005, pp. 377--384.

\bibitem{herschtal2004optimising}
A.~Herschtal and B.~Raskutti, ``Optimising area under the {ROC} curve using
  gradient descent,'' in \emph{International Conference on Machine
  Learning}.\hskip 1em plus 0.5em minus 0.4em\relax ACM, 2004, p.~49.

\bibitem{Xinhua}
X.~Zhang, A.~Saha, and S.~V.~N. Vishwanathan, ``Smoothing multivariate
  performance measures,'' \emph{Journal of Machine Learning Research}, vol.~13,
  pp. 3623--3680, 2012.

\bibitem{nesterov1983method}
Y.~Nesterov, ``A method for solving the convex programming problem with
  convergence rate $o(1/k^2)$,'' in \emph{Dokl. Akad. Nauk SSSR}, 1983, pp.
  543--547.

\bibitem{Culver2006}
M.~{Culver}, D.~{Kun}, and S.~{Scott}, ``Active learning to maximize area under
  the roc curve,'' in \emph{Sixth International Conference on Data Mining
  (ICDM'06)}, 2006, pp. 149--158.

\bibitem{Wang2012}
\BIBentryALTinterwordspacing
Y.~Wang, R.~Khardon, D.~Pechyony, and R.~Jones, ``Generalization bounds for
  online learning algorithms with pairwise loss functions,'' in
  \emph{Proceedings of the 25th Annual Conference on Learning Theory}, ser.
  Proceedings of Machine Learning Research, S.~Mannor, N.~Srebro, and R.~C.
  Williamson, Eds., vol.~23.\hskip 1em plus 0.5em minus 0.4em\relax Edinburgh,
  Scotland: PMLR, 25--27 Jun 2012, pp. 13.1--13.22. [Online]. Available:
  \url{http://proceedings.mlr.press/v23/wang12.html}
\BIBentrySTDinterwordspacing

\bibitem{Kar2013}
P.~Kar, B.~K. Sriperumbudur, P.~Jain, and H.~C. Karnick, ``On the
  generalization ability of online learning algorithms for pairwise loss
  functions,'' in \emph{Proceedings of the 30th International Conference on
  International Conference on Machine Learning - Volume 28}, ser. ICML
  '13.\hskip 1em plus 0.5em minus 0.4em\relax JMLR.org, 2013, pp.
  III--441--III--449.

\bibitem{Zhao2011}
P.~Zhao, S.~C.~H. Hoi, R.~Jin, and T.~Yang, ``Online auc maximization,'' in
  \emph{Proceedings of the 28th International Conference on International
  Conference on Machine Learning}, ser. ICML '11.\hskip 1em plus 0.5em minus
  0.4em\relax Madison, WI, USA: Omnipress, 2011, pp. 233--240.

\bibitem{gao2013}
W.~Gao, R.~Jin, S.~Zhu, and Z.-H. Zhou, ``One-pass auc optimization,'' in
  \emph{Proceedings of the 30th International Conference on International
  Conference on Machine Learning - Volume 28}, ser. ICML’13.\hskip 1em plus
  0.5em minus 0.4em\relax JMLR.org, 2013, p. III–906–III–914.

\bibitem{gultekin2020mba}
S.~Gultekin, A.~Saha, A.~Ratnaparkhi, and J.~Paisley, ``Mba: mini-batch auc
  optimization,'' \emph{IEEE Transactions on Neural Networks and Learning
  Systems}, 2020.

\bibitem{khalid2018scalable}
M.~Khalid, I.~Ray, and H.~Chitsaz, ``Scalable nonlinear auc maximization
  methods,'' in \emph{Joint European Conference on Machine Learning and
  Knowledge Discovery in Databases}.\hskip 1em plus 0.5em minus 0.4em\relax
  Springer, 2018, pp. 292--307.

\bibitem{liu2019stochastic}
M.~Liu, Z.~Yuan, Y.~Ying, and T.~Yang, ``Stochastic auc maximization with deep
  neural networks,'' \emph{International Conference on Learning Representations
  (ICLR)}, 2020.

\bibitem{ying2016stochastic}
Y.~Ying, L.~Wen, and S.~Lyu, ``Stochastic online auc maximization,'' in
  \emph{Advances in neural information processing systems}, 2016, pp. 451--459.

\bibitem{nemirovski2009robust}
A.~Nemirovski, A.~Juditsky, G.~Lan, and A.~Shapiro, ``Robust stochastic
  approximation approach to stochastic programming,'' \emph{SIAM Journal on
  optimization}, vol.~19, no.~4, pp. 1574--1609, 2009.

\bibitem{liu2018fast}
M.~Liu, X.~Zhang, Z.~Chen, X.~Wang, and T.~Yang, ``Fast stochastic auc
  maximization with o (1/n)-convergence rate,'' in \emph{International
  Conference on Machine Learning}, 2018, pp. 3195--3203.

\bibitem{Natole2018}
\BIBentryALTinterwordspacing
M.~Natole, Jr., Y.~Ying, and S.~Lyu, ``Stochastic proximal algorithms for {AUC}
  maximization,'' in \emph{Proceedings of the 35th International Conference on
  Machine Learning}, ser. Proceedings of Machine Learning Research, J.~Dy and
  A.~Krause, Eds., vol.~80.\hskip 1em plus 0.5em minus 0.4em\relax
  Stockholmsmässan, Stockholm Sweden: PMLR, 10--15 Jul 2018, pp. 3710--3719.
  [Online]. Available: \url{http://proceedings.mlr.press/v80/natole18a.html}
\BIBentrySTDinterwordspacing

\bibitem{guyon2002gene}
I.~Guyon, J.~Weston, S.~Barnhill, and V.~Vapnik, ``Gene selection for cancer
  classification using support vector machines,'' \emph{Machine learning},
  vol.~46, no. 1-3, pp. 389--422, 2002.

\bibitem{Golub1999un}
T.~R. Golub, D.~K. Slonim, P.~Tamayo, C.~Huard, M.~Gaasenbeek, J.~P. Mesirov,
  H.~Coller, M.~L. Loh, J.~R. Downing, M.~A. Caligiuri, C.~D. Bloomfield, and
  E.~S. Lander, ``{Molecular classification of cancer: class discovery and
  class prediction by gene expression monitoring.}'' \emph{Science}, vol. 286,
  no. 5439, pp. 531--537, Oct. 1999.

\bibitem{hromadka2008sparse}
T.~Hrom{\'a}dka, M.~R. DeWeese, and A.~M. Zador, ``Sparse representation of
  sounds in the unanesthetized auditory cortex,'' \emph{PLoS biology}, vol.~6,
  no.~1, 2008.

\bibitem{candes2008introduction}
E.~J. Cand{\`e}s and M.~B. Wakin, ``An introduction to compressive sampling,''
  \emph{IEEE signal processing magazine}, vol.~25, no.~2, pp. 21--30, 2008.

\bibitem{wright2008robust}
J.~Wright, A.~Y. Yang, A.~Ganesh, S.~S. Sastry, and Y.~Ma, ``Robust face
  recognition via sparse representation,'' \emph{IEEE transactions on pattern
  analysis and machine intelligence}, vol.~31, no.~2, pp. 210--227, 2008.

\bibitem{yuan2006model}
M.~Yuan and Y.~Lin, ``Model selection and estimation in regression with grouped
  variables,'' \emph{Journal of the Royal Statistical Society: Series B
  (Statistical Methodology)}, vol.~68, no.~1, pp. 49--67, 2006.

\bibitem{turlach2005simultaneous}
B.~A. Turlach, W.~N. Venables, and S.~J. Wright, ``Simultaneous variable
  selection,'' \emph{Technometrics}, vol.~47, no.~3, pp. 349--363, 2005.

\bibitem{jacob2009group}
L.~Jacob, G.~Obozinski, and J.-P. Vert, ``Group lasso with overlap and graph
  lasso,'' in \emph{Proceedings of the 26th annual international conference on
  machine learning}, 2009, pp. 433--440.

\bibitem{jenatton2011structured}
R.~Jenatton, J.-Y. Audibert, and F.~Bach, ``Structured variable selection with
  sparsity-inducing norms,'' \emph{Journal of Machine Learning Research},
  vol.~12, no. Oct, pp. 2777--2824, 2011.

\bibitem{lei2019stochastic}
Y.~Lei and Y.~Ying, ``Stochastic proximal auc maximization,'' \emph{arXiv
  preprint arXiv:1906.06053}, 2019.

\bibitem{duchi2009efficient}
J.~Duchi and Y.~Singer, ``Efficient online and batch learning using forward
  backward splitting,'' \emph{Journal of Machine Learning Research}, vol.~10,
  no. Dec, pp. 2899--2934, 2009.

\bibitem{langford2009sparse}
J.~Langford, L.~Li, and T.~Zhang, ``Sparse online learning via truncated
  gradient,'' \emph{Journal of Machine Learning Research}, vol.~10, no. Mar,
  pp. 777--801, 2009.

\bibitem{xiao2010dual}
L.~Xiao, ``Dual averaging methods for regularized stochastic learning and
  online optimization,'' \emph{Journal of Machine Learning Research}, vol.~11,
  no. Oct, pp. 2543--2596, 2010.

\bibitem{nguyen2017linear}
N.~Nguyen, D.~Needell, and T.~Woolf, ``Linear convergence of stochastic
  iterative greedy algorithms with sparse constraints,'' \emph{IEEE
  Transactions on Information Theory}, vol.~63, no.~11, pp. 6869--6895, 2017.

\bibitem{zhou2018efficient}
P.~Zhou, X.~Yuan, and J.~Feng, ``Efficient stochastic gradient hard
  thresholding,'' in \emph{Advances in Neural Information Processing Systems},
  2018, pp. 1985--1994.

\bibitem{murata2018sample}
T.~Murata and T.~Suzuki, ``Sample efficient stochastic gradient iterative hard
  thresholding method for stochastic sparse linear regression with limited
  attribute observation,'' in \emph{Advances in Neural Information Processing
  Systems}, 2018, pp. 5313--5322.

\bibitem{Shen2018}
\BIBentryALTinterwordspacing
J.~Shen and P.~Li, ``A tight bound of hard thresholding,'' \emph{Journal of
  Machine Learning Research}, vol.~18, no. 208, pp. 1--42, 2018. [Online].
  Available: \url{http://jmlr.org/papers/v18/16-299.html}
\BIBentrySTDinterwordspacing

\bibitem{liu2017dual}
B.~Liu, X.-T. Yuan, L.~Wang, Q.~Liu, and D.~N. Metaxas, ``Dual iterative hard
  thresholding: From non-convex sparse minimization to non-smooth concave
  maximization,'' in \emph{International Conference on Machine Learning}, 2017,
  pp. 2179--2187.

\bibitem{tropp2010computational}
J.~A. Tropp and S.~J. Wright, ``Computational methods for sparse solution of
  linear inverse problems,'' \emph{Proceedings of the IEEE}, vol.~98, no.~6,
  pp. 948--958, 2010.

\bibitem{clemenccon2008ranking}
S.~Cl{\'e}men{\c{c}}on, G.~Lugosi, N.~Vayatis \emph{et~al.}, ``Ranking and
  empirical minimization of u-statistics,'' \emph{The Annals of Statistics},
  vol.~36, no.~2, pp. 844--874, 2008.

\bibitem{Negahban2009}
S.~Negahban, P.~Ravikumar, M.~J. Wainwright, and B.~Yu, ``A unified framework
  for high-dimensional analysis of m-estimators with decomposable
  regularizers,'' in \emph{Proceedings of the 22nd International Conference on
  Neural Information Processing Systems}, ser. NIPS’09.\hskip 1em plus 0.5em
  minus 0.4em\relax Red Hook, NY, USA: Curran Associates Inc., 2009, p.
  1348–1356.

\bibitem{agarwal2012fast}
A.~Agarwal, S.~Negahban, M.~J. Wainwright \emph{et~al.}, ``Fast global
  convergence of gradient methods for high-dimensional statistical recovery,''
  \emph{The Annals of Statistics}, vol.~40, no.~5, pp. 2452--2482, 2012.

\bibitem{cortes2004auc}
C.~Cortes and M.~Mohri, ``Auc optimization vs. error rate minimization,'' in
  \emph{Advances in neural information processing systems}, 2004, pp. 313--320.

\bibitem{shen2016online}
J.~Shen, P.~Li, and H.~Xu, ``Online low-rank subspace clustering by basis
  dictionary pursuit,'' in \emph{International Conference on Machine Learning},
  2016, pp. 622--631.

\bibitem{ying2006online}
Y.~Ying and D.-X. Zhou, ``Online regularized classification algorithms,''
  \emph{IEEE Transactions on Information Theory}, vol.~52, no.~11, pp.
  4775--4788, 2006.

\bibitem{Nguyen2017}
N.~{Nguyen}, D.~{Needell}, and T.~{Woolf}, ``Linear convergence of stochastic
  iterative greedy algorithms with sparse constraints,'' \emph{IEEE
  Transactions on Information Theory}, vol.~63, no.~11, pp. 6869--6895, Nov
  2017.

\bibitem{needell2009cosamp}
D.~Needell and J.~A. Tropp, ``Cosamp: Iterative signal recovery from incomplete
  and inaccurate samples,'' \emph{Applied and computational harmonic analysis},
  vol.~26, no.~3, pp. 301--321, 2009.

\bibitem{blum1973time}
\BIBentryALTinterwordspacing
M.~Blum, R.~W. Floyd, V.~Pratt, R.~L. Rivest, and R.~E. Tarjan, ``Time bounds
  for selection,'' \emph{J. Comput. Syst. Sci.}, vol.~7, no.~4, pp. 448--461,
  Aug. 1973. [Online]. Available:
  \url{https://doi.org/10.1016/S0022-0000(73)80033-9}
\BIBentrySTDinterwordspacing

\bibitem{Jain2014}
P.~Jain, A.~Tewari, and P.~Kar, ``On iterative hard thresholding methods for
  high-dimensional m-estimation,'' in \emph{Proceedings of the 27th
  International Conference on Neural Information Processing Systems - Volume
  1}, ser. NIPS '14.\hskip 1em plus 0.5em minus 0.4em\relax Cambridge, MA, USA:
  MIT Press, 2014, pp. 685--693.

\bibitem{elenberg2018restricted}
E.~R. Elenberg, R.~Khanna, A.~G. Dimakis, S.~Negahban \emph{et~al.},
  ``Restricted strong convexity implies weak submodularity,'' \emph{The Annals
  of Statistics}, vol.~46, no.~6B, pp. 3539--3568, 2018.

\bibitem{arias2011detection}
E.~Arias-Castro, E.~J. Cand{\`e}s, and A.~Durand, ``Detection of an anomalous
  cluster in a network,'' \emph{The Annals of Statistics}, pp. 278--304, 2011.

\bibitem{Alon1999dy}
U.~Alon, N.~Barkai, D.~Notterman, K.~Gish, S.~Ybarra, D.~Mack, and A.~Levine,
  ``{Broad patterns of gene expression revealed by clustering analysis of tumor
  and normal colon tissues probed by oligonucleotide arrays},''
  \emph{Proceedings of the National Academy of Sciences}, vol.~96, no.~12, pp.
  6745--6750, Jun. 1999.

\bibitem{agarwal2009ranking}
S.~Agarwal and S.~Sengupta, ``Ranking genes by relevance to a disease,'' in
  \emph{Proceedings of the 8th annual international conference on computational
  systems bioinformatics}, 2009.

\bibitem{Raskutti2009}
G.~{Raskutti}, M.~J. {Wainwright}, and B.~{Yu}, ``{Minimax rates of estimation
  for high-dimensional linear regression over $\ell_q$-balls},'' \emph{arXiv
  e-prints}, p. arXiv:0910.2042, Oct 2009.

\bibitem{Bellec2016}
P.~C. {Bellec}, G.~{Lecu\'e}, and A.~B. {Tsybakov}, ``{Slope meets Lasso:
  improved oracle bounds and optimality},'' \emph{arXiv e-prints}, p.
  arXiv:1605.0865, May 2016.

\end{thebibliography}

\clearpage
\onecolumn

\section{Supplementary Material}
In this Supplementary Material, we provide the detailed proofs for Proposition \ref{prop:formulation}, Theorems \ref{thm:StoIHT} and \ref{thm:rip}.  

\subsection{Proof of Proposition~\ref{prop:formulation}}\label{sec:proof-prop1}
\begin{proof}
The objective function of AUC maximization given by ~\eqref{eq:single-sum-AUC} can be write in three terms,
\begin{align*}
    & F(\bm w)  = \frac{1}{n_+}\frac{1}{n_-}\sum_{i=1}^n\sum_{j=1}^n(1-\bm w^{\top}(\bm x_i-\bm x_j))^2\mathbb{I}_{[y_i=1]}\mathbb{I}_{[y_j=-1]}\\
    & = \frac{1}{n_+}\frac{1}{n_-} \sum_{i=1}^n\sum_{j=1}^n \left(1 + \bm w^\top \left(\overline{\bm x}_- - \overline{\bm x}_+\right) -  \bm w^\top \left(\bm x_i - \overline{\bm x}_+\right) + \bm w^\top \left(\bm x_j - \overline{\bm x}_-\right)\right)^2\mathbb{I}_{[y_i=1]}\mathbb{I}_{[y_j=-1]}\\
    & = \underbrace{\frac{1}{n_+}\frac{1}{n_-}\sum_{i=1}^n\sum_{j=1}^n\left(1 + \bm w^\top \left(\overline{\bm x}_- - \overline{\bm x}_+\right)\right)^2\mathbb{I}_{[y_i=1]}\mathbb{I}_{[y_j=-1]} }_{I}\\
    &\quad+ \underbrace{\frac{1}{n_+}\frac{1}{n_-}\sum_{i=1}^n\sum_{j=1}^n\left(\bm w^\top \left(\bm x_i - \overline{\bm x}_+\right) - \bm w^\top \left(\bm x_j - \overline{\bm x}_-\right)\right)^2\mathbb{I}_{[y_i=1]}\mathbb{I}_{[y_j=-1]}}_{II}\\
    &\quad+ \underbrace{\frac{1}{n_+}\frac{1}{n_-}\sum_{i=1}^n\sum_{j=1}^n2\left(1 + \bm w^\top \left(\overline{\bm x}_- - \overline{\bm x}_+\right)\right)\left(\bm w^\top \left(\bm x_i - \overline{\bm x}_+\right) - \bm w^\top \left(\bm x_j - \overline{\bm x}_-\right)\right)\mathbb{I}_{[y_i=1]}\mathbb{I}_{[y_j=-1]}}_{III}.
\end{align*}
It suffices to estimate the above terms one by one. To this end, the first term has $n_+n_-$ same terms, so $$I = \left(1 + \bm w^\top \left(\overline{\bm x}_- - \overline{\bm x}_+\right)\right)^2  = 1 + 2\bm w^\top \left(\overline{\bm x}_- - \overline{\bm x}_+\right) + \left(\bm w^\top \left(\overline{\bm x}_- - \overline{\bm x}_+\right)\right)^2.$$
For the second term, notice that the cross term \begin{align*}
&\frac{1}{n_+}\frac{1}{n_-}\sum_{i=1}^n\sum_{j=1}^n 2\bm w^\top \left(\bm x_i - \overline{\bm x}_+\right) \bm w^\top \left(\bm x_j - \overline{\bm x}_-\right)\mathbb{I}_{[y_i=1]}\mathbb{I}_{[y_j=-1]}\\
& = 2\left(\frac{1}{n_+}\sum_{i=1}^n \bm w^\top \left(\bm x_i - \overline{\bm x}_+\right)\mathbb{I}_{[y_i=1]}\right)\left(\frac{1}{n_-}\sum_{j=1}^n \bm w^\top \left(\bm x_j - \overline{\bm x}_-\right)\mathbb{I}_{[y_j=-1]}\right)\\
& = 2\left(\bm w^\top \left(\overline{\bm x}_+ - \overline{\bm x}_+\right)\right)\left(\bm w^\top \left( \overline{\bm x}_-- \overline{\bm x}_-\right)\right)\\
& = 0,
\end{align*}
we have 
\begin{align*}
    II& = \frac{1}{n_+}\frac{1}{n_-}\sum_{i=1}^n\sum_{j=1}^n \left(\bm w^\top \left(\bm x_i - \overline{\bm x}_+\right)\right)^2\mathbb{I}_{[y_i=1]}\mathbb{I}_{[y_j=-1]} +  \frac{1}{n_+}\frac{1}{n_-}\sum_{i=1}^n\sum_{j=1}^n\left(\bm w^\top \left(\bm x_j - \overline{\bm x}_-\right)\right)^2\mathbb{I}_{[y_i=1]}\mathbb{I}_{[y_j=-1]}\\& = \frac{1}{n_+}\sum_{i=1}^n \left(\bm w^\top \left(\bm x_i - \overline{\bm x}_+\right)\right)^2\mathbb{I}_{[y_i=1]} +  \frac{1}{n_-}\sum_{j=1}^n\left(\bm w^\top \left(\bm x_j - \overline{\bm x}_-\right)\right)^2\mathbb{I}_{[y_j=-1]}\\
    & = \frac{1}{n_+}\sum_{i=1}^n \left(\bm w^\top \left(\bm x_i - \overline{\bm x}_+\right)\right)^2\mathbb{I}_{[y_i=1]} +  \frac{1}{n_-}\sum_{i=1}^n\left(\bm w^\top \left(\bm x_i - \overline{\bm x}_-\right)\right)^2\mathbb{I}_{[y_i=-1]}.
\end{align*}
For the third term, by a similar argument of the cross term in the second term, we have $III = 0.$
Now the equation~\eqref{eq:single-sum-AUC} holds by proper scaling.
\end{proof}

\subsection{Proof of Theorem~\ref{thm:StoIHT}}\label{sec:proof-thm1}
Before we introduce the proof of the theorem we need several lemmas. 

\noindent The first lemma was originally proved in~\cite{nguyen2017linear}. It provides an estimate for our convergence analysis. Recall that we assume $\{f_{B_i}(\bm w) \}_{i=1}^m$ satisfies the RSS and $F(\bm w) = \sum_{i=1}^m f_{B_i}(\bm w)$ satisfies the RSC. 
\begin{lem}
\label{lem::StoIHT 1st corollary}
Let $i$ be an index selected with probability $1/n$ from the set $[n]$. For any fixed sparse vectors $\bm w$ and $\bm w'$, let $\Omega$ be a set such that $\supp(w) \cup \supp(w') \in \Omega$ and denote $s = |\Omega|$. We have
\begin{align*}
\label{inq::1st key observation}
&\E_i \norm{w' - w - \gamma \oper P_{\Omega} \left(\nabla f_{B_i}(w') - \nabla f_{B_i}(w) \right)}_2\leq \sqrt{1- (2 - \gamma \rho^+_s) \gamma \rho^-_s } \norm{w'-w}_2 \numberthis
\end{align*}

\end{lem}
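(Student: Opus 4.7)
The plan is to reduce the expectation of the norm to a squared-norm bound via Jensen's inequality and then expand the square, so that the cross term can be controlled by RSC and the quadratic term by RSS combined with co-coercivity. First I would write
\begin{equation*}
\E_i \bigl\| w'-w - \gamma \mathcal{P}_\Omega(\nabla f_{B_i}(w') - \nabla f_{B_i}(w))\bigr\|_2 \;\leq\; \sqrt{\E_i \bigl\| w'-w - \gamma \mathcal{P}_\Omega(\nabla f_{B_i}(w') - \nabla f_{B_i}(w))\bigr\|_2^2}
\end{equation*}
by Jensen, so it suffices to bound the second-moment quantity by $\bigl(1-(2-\gamma\rho_s^+)\gamma\rho_s^-\bigr)\|w'-w\|_2^2$.

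Next I would expand the square, obtaining three terms: $\|w'-w\|_2^2$, a cross term $-2\gamma\langle w'-w,\; \mathcal{P}_\Omega(\nabla f_{B_i}(w')-\nabla f_{B_i}(w))\rangle$, and $\gamma^2 \|\mathcal{P}_\Omega(\nabla f_{B_i}(w')-\nabla f_{B_i}(w))\|_2^2$. Since $\mathrm{supp}(w)\cup\mathrm{supp}(w')\subseteq\Omega$, the vector $w'-w$ lies entirely in the coordinate subspace indexed by $\Omega$, so the projection can be removed from the cross term. Taking $\E_i$ and using the unbiasedness $\E_i \nabla f_{B_i} = \nabla F$, the cross term becomes $-2\gamma\langle w'-w,\nabla F(w')-\nabla F(w)\rangle$, which by Assumption \ref{ass:rsc} (RSC applied at both $w$ and $w'$ and added) is bounded above by $-2\gamma\rho_s^- \|w'-w\|_2^2$.

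For the quadratic remainder I plan to invoke the co-coercivity of each $f_{B_i}$, namely
\begin{equation*}
\bigl\|\mathcal{P}_\Omega(\nabla f_{B_i}(w')-\nabla f_{B_i}(w))\bigr\|_2^2 \;\leq\; \rho_s^+ \bigl\langle \nabla f_{B_i}(w')-\nabla f_{B_i}(w),\; w'-w\bigr\rangle,
\end{equation*}
which follows because each $f_{B_i}$ is a convex quadratic (as seen from the explicit form of $\widetilde f$ in Proposition \ref{prop:formulation}) and satisfies RSS on $\Omega$, so the standard convex-plus-Lipschitz-gradient co-coercivity inequality applies on the subspace. Taking $\E_i$ on both sides and using unbiasedness again, the right-hand side becomes $\rho_s^+ \langle \nabla F(w')-\nabla F(w),\, w'-w\rangle \leq \rho_s^+ \cdot (\text{same quantity bounded below by RSC})$.

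Assembling the three pieces gives
\begin{equation*}
\E_i \|\cdot\|_2^2 \;\leq\; \|w'-w\|_2^2 - (2\gamma-\gamma^2\rho_s^+)\bigl\langle \nabla F(w')-\nabla F(w), w'-w\bigr\rangle \;\leq\; \bigl(1-(2-\gamma\rho_s^+)\gamma\rho_s^-\bigr)\|w'-w\|_2^2,
\end{equation*}
provided $\gamma \leq 2/\rho_s^+$ so that the factor $(2-\gamma\rho_s^+)$ is nonnegative and the RSC lower bound can be used with the correct sign. Taking square roots yields the claim. The main obstacle I foresee is the step that justifies the co-coercivity inequality on the restricted subspace: one needs both convexity of $f_{B_i}$ and that RSS at the sparsity level $s$ really plays the role of a global Lipschitz constant for the gradient when restricted to vectors supported in $\Omega$. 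This should be handled by quoting the standard convex-analysis argument (Baillon--Haddad style) applied to the restricted function $u \mapsto f_{B_i}(\mathcal{P}_\Omega u)$, whose gradient is $\mathcal{P}_\Omega \nabla f_{B_i}$ and which is $\rho_s^+$-smooth and convex on $\mathbb{R}^{|\Omega|}$.
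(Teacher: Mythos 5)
Your proof is correct. The paper does not actually prove this lemma itself --- it imports it verbatim from \cite{nguyen2017linear} --- and your argument (Jensen, expand the square, drop $\mathcal{P}_\Omega$ from the cross term since $w'-w$ is supported on $\Omega$, bound the cross term via RSC of $F$ and the quadratic term via restricted co-coercivity of each $f_{B_i}$, then use $\gamma \le 2/\rho_s^+$) is essentially the standard proof given in that reference. The one point genuinely requiring care, namely that co-coercivity needs convexity of $f_{B_i}$ on the subspace indexed by $\Omega$ and not merely RSS, is something you correctly flag and which holds here because each $\widetilde{f}(\cdot;\bm z_i)$ is a sum of squares of affine functions plus an affine term, hence a convex quadratic; it would be worth stating this explicitly as an additional hypothesis if the lemma is meant to stand on Assumptions~\ref{ass:rsc}--\ref{ass:rss} alone.
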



The second lemma provides a refined bound on the deviation of the thresholded variable, which is originally proved in \cite{Shen2018}.

\begin{lem}
\label{lem:hard-thresholding-deviation}
Let $\bm w \in \mathbb{R}^d$ be an arbitrary vector and $\bm w^* \in \mathbb{R}^d$ be any $k^*$-sparse vector. For any $k \geq k^*$, we have the following bound:
\begin{equation*}
	\norm{\mathcal{H}_k(\bm w) - \bm w^*}_2 \leq \sqrt{1+\nu} \norm{\bm w - \bm w^*}_2,~\nu =  \frac{\mu + \sqrt{(4 + \mu ) \mu } }{2},~\mu =  \frac{\min\{k^*, d-k\}}{k - k^* + \min\{k^*, d-k\}}.
\end{equation*}
\end{lem}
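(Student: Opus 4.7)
\textbf{Proof plan for Lemma~\ref{lem:hard-thresholding-deviation}.}
Write $\hat{\bm w} := \mathcal{H}_k(\bm w)$, and let $\Omega = \supp(\hat{\bm w})$ (with $|\Omega|=k$) and $\Omega^* = \supp(\bm w^*)$ (with $|\Omega^*|\le k^*$). Partition their symmetric difference as
$$ T := \Omega \setminus \Omega^*, \qquad T' := \Omega^* \setminus \Omega, $$
so $|T|-|T'| = k-|\Omega^*|\ge k-k^*\ge 0$, while $|T'|\le \min\{k^*,\,d-k\}$ (because $T'\subseteq \Omega^c$ and $T'\subseteq \Omega^*$). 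The first step is to express everything in terms of these three disjoint index sets plus the common part $\Omega\cap\Omega^*$, noting that on $\Omega$ we have $\hat{\bm w}=\bm w$, on $\Omega^c$ we have $\hat{\bm w}=\bm 0$, and on $T$ we have $\bm w^*=\bm 0$.

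A straightforward component-wise expansion then gives
$$
\bigl\|\hat{\bm w} - \bm w^*\bigr\|_2^{2} \;-\; \bigl\|\bm w - \bm w^*\bigr\|_2^{2} \;=\; \sum_{i\in T'}\bigl(2 w_i w^*_i - w_i^{2}\bigr)\;-\;\sum_{i\in(\Omega\cup\Omega^*)^c} w_i^{2},
$$
which I would upper bound by dropping the last (non-positive contribution) term and applying Cauchy--Schwarz on $\sum_{i\in T'} w_i w^*_i \le \|\bm w_{T'}\|_2\|\bm w^*_{T'}\|_2$. For the denominator, I would lower bound $\|\bm w-\bm w^*\|_2^{2}$ by restricting to $T\cup T'$ and using $\bm w^*_T=\bm 0$ together with the reverse triangle inequality on $T'$, giving
$$ \|\bm w-\bm w^*\|_2^{2} \;\ge\; \|\bm w_T\|_2^{2} + \bigl(\|\bm w^*_{T'}\|_2-\|\bm w_{T'}\|_2\bigr)^{2}. $$

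The crucial hard-thresholding property is that every entry of $|\bm w_T|$ is at least every entry of $|\bm w_{T'}|$ (because $T\subseteq\Omega$ and $T'\subseteq \Omega^c$, and $\Omega$ indexes the top-$k$ magnitudes). Hence $\min_{i\in T} w_i^{2}\ge \max_{j\in T'} w_j^{2} \ge \|\bm w_{T'}\|_2^{2}/|T'|$, which yields the dominance inequality
$$ |T'|\,\|\bm w_T\|_2^{2} \;\ge\; |T|\,\|\bm w_{T'}\|_2^{2}. $$
Introducing $a=\|\bm w_T\|_2$, $b=\|\bm w_{T'}\|_2$, $c=\|\bm w^*_{T'}\|_2$, $\rho=|T|/|T'|$, and $t=c/b$, the ratio $\|\hat{\bm w}-\bm w^*\|_2^{2}/\|\bm w-\bm w^*\|_2^{2}-1$ is bounded by
$$ \frac{2bc-b^{2}}{a^{2}+(c-b)^{2}} \;\le\; \frac{2t-1}{\rho + (t-1)^{2}} \;=:\; g(t). $$

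The last step is a one-variable maximization: setting $g'(t)=0$ gives $t^{2}-t=\rho$, whence $t-1=(-1+\sqrt{1+4\rho})/2$ and a little algebra reduces $g(t)$ to $1/(t-1)=(\sqrt{1+4\rho}+1)/(2\rho)$. Since $|T'|\le \min\{k^*,d-k\}$ and $|T|\ge k-k^*+|T'|$ force $\rho\ge 1/\mu$ for $\mu$ as in the statement (and the bound $g$ is monotone decreasing in $\rho$, so the worst case is exactly $\rho=1/\mu$), substituting $\rho=1/\mu$ gives the bound $\mu(\sqrt{1+4/\mu}+1)/2 = (\mu+\sqrt{\mu(4+\mu)})/2 = \nu$. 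Taking square roots yields the lemma. The main delicate point, and the one I would be most careful with, is verifying that the monotonicity argument in $\rho$ genuinely lets us fix $\rho=1/\mu$ as the extremal case, and checking the edge cases where $b=0$ (trivial: no gain on $T'$) or where $g(t)\le 0$ (already contained in the bound $\nu\ge 0$).
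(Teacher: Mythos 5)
The paper does not actually prove Lemma~\ref{lem:hard-thresholding-deviation}: it imports the statement verbatim from \cite{Shen2018} with a one-line citation. Your argument is therefore being compared against the original Shen--Li derivation rather than anything in this manuscript, and as far as I can check it is a correct, self-contained reconstruction of that result along essentially the same lines (support decomposition, the ordering property of the top-$k$ selection, reduction to a scalar ratio maximization). All the key steps verify: the identity $\norm{\hat{\bm w}-\bm w^*}_2^2-\norm{\bm w-\bm w^*}_2^2=\sum_{i\in T'}(2w_iw_i^*-w_i^2)-\sum_{i\in(\Omega\cup\Omega^*)^c}w_i^2$ is exact; the dominance inequality $a^2\ge(|T|/|T'|)\,b^2$ follows from $\min_{i\in T}|w_i|\ge\max_{j\in T'}|w_j|$ together with $\max\ge\text{average}$; the stationarity condition $t^2-t=\rho$ gives $\rho+(t-1)^2=(2t-1)(t-1)$ and hence $g(t_{\max})=1/(t_{\max}-1)=(\sqrt{1+4\rho}+1)/(2\rho)$, which is indeed monotonically decreasing in $\rho$ (its derivative has numerator $-4\rho-2-2\sqrt{1+4\rho}<0$); and $\rho=|T|/|T'|\ge 1+(k-k^*)/\min\{k^*,d-k\}=1/\mu$ follows from $|T|-|T'|=k-|\supp(\bm w^*)|\ge k-k^*$ and $|T'|\le\min\{k^*,d-k\}$. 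Substituting $\rho=1/\mu$ recovers exactly $\nu=(\mu+\sqrt{\mu(4+\mu)})/2$, and the sanity check $k=k^*$ gives $\mu=1$, $\nu=(1+\sqrt5)/2$ and overall constant $\sqrt{1+\nu}=(1+\sqrt5)/2$, the known tight golden-ratio bound. Two cosmetic points: take $\Omega$ to be the top-$k$ index set $\Gamma$ of \eqref{eq:ht} rather than $\supp(\mathcal{H}_k(\bm w))$, which can have fewer than $k$ elements when $\bm w$ has fewer than $k$ nonzeros (the extra indices carry $w_i=0$, so nothing changes); and the edge cases you flag ($T'=\emptyset$, $b=0$, numerator $\le 0$, $\bm w=\bm w^*$) are all genuinely trivial since in each the ratio is at most $1\le 1+\nu$. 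Net: your proof is correct and would make the paper more self-contained than the citation it currently relies on.
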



\begin{proof} [Proof of Theorem~\ref{thm:StoIHT}]
By specifying $\Omega = \supp(\bm w_{t+1}) \cup \supp(\bm w_{t}) \cup \supp(\bm w_{*})$ and notice $|\Omega|\leq 2k + k_*$, it follows that $$\mathcal{H}_{2k+k_*}\left(\widehat{\bm w}_t\right) = \mathcal{H}_{2k+k_*}\left(\mathcal{P}_{\Omega}\left(\widehat{\bm w}_t\right)\right).$$ Thus, by the updating rule in Algorithm~\ref{alg:SG-HT} and  Lemma~\ref{lem:hard-thresholding-deviation} we have,

\begin{equation}
\begin{split}
\nonumber
& \norm{\bm w_{t+1} - \bm w_*}_2  = \norm{\mathcal{H}_{2k+k_*}\left(\mathcal{P}_{\Omega}\left(\widehat{\bm w}_t\right)\right)- \bm w_*}_2\\
&\leq \sqrt{1+\nu}\norm{\mathcal{P}_{\Omega}\left(\widehat{\bm w}_t\right) - \bm w_*}_2 \\
&= \sqrt{1+\nu}\norm{\bm w_t - \bm w_* - \gamma \mathcal{P}_{\Omega}\left(\nabla f_{B_{i_t}}(\bm w_t)\right)}_2 \\
&\leq \sqrt{1+\nu} (\norm{\gamma \mathcal{P}_{\Omega}\left(\nabla f_{B_{i_t}}(\bm w_*)\right)}_2 + \norm{\bm w_t - \bm w_* - \gamma \mathcal{P}_{\Omega}\left( \nabla f_{B_{i_t}}(\bm w_t) - \nabla f_{B_{i_t}}(\bm w_*) \right)}_2) \\
\end{split}
\end{equation}

where the second inequality holds because $\supp(\bm w_t - \bm w_*) \subseteq \Omega$, the second inequality holds because of triangle inequality.

Denote $I_t$ as the set containing all indices $i_1, i_2,..., i_t$ randomly selected at or before step $t$ of the algorithm: $I_t = \{i_1,...,i_t\}$. It is clear that $I_t$ determines the solutions $\bm w_1,...,\bm w_{t+1}$. We also denote the conditional expectation $\E_{i_t | I_{t-1}} \norm{\bm w_{t+1} - \bm w_*}_2 \triangleq \E_{i_t} (\norm{\bm w_{t+1} - \bm w_*}_2| I_{t-1})$. Now taking the conditional expectation on both sides of the above inequality  we obtain
\begin{equation}
\begin{split}
\nonumber
\E_{i_t | I_{t-1}} \norm{\bm w_{t+1} - \bm w_*}_2 \leq \sqrt{1+\nu}( \E_{i_t | I_{t-1}} \norm{\bm w_t - \bm w_* -\gamma \mathcal{P}_{\Omega}\left( \nabla f_{B_{i_t}}(\bm w_t) - \nabla f_{B_{i_t}}(\bm w_*) \right)}_2 \\ + \E_{i_t | I_{t-1}} \norm{\gamma \mathcal{P}_{\Omega}\left(\nabla f_{B_{i_t}}(\bm w_*)\right)}_2 ).
\end{split}
\end{equation}

Conditioning on $I_{t-1}$, $\bm w_t$ can be seen as a fixed vector. We apply the inequality (\ref{inq::1st key observation}) of Lemma \ref{lem::StoIHT 1st corollary}, we get 
\begin{equation}
\begin{split}
\nonumber
\E_{i_t | I_{t-1}} \norm{\bm w_{t+1} - \bm w_*}_2 & \leq \sqrt{(1+\nu)\left(1 -(2\gamma -\gamma^2\rho_{2k+k_*}^+)\rho_{2k+k_*}^-\right)} \norm{\bm w_t-\bm w_*}_2 \\ & + \sqrt{1+\nu} \gamma \E_{i_t}\norm{\mathcal{P}_{\Omega}\left( \nabla f_{B_{i_t}} (\bm w_*)\right)}_2 \\
&\leq \kappa \norm{\bm w_t -\bm w_*}_2 +  \sigma_{\bm w_*},
\end{split}
\end{equation}
where $\kappa$ and $\sigma_{w_{\star}}$ are defined in Theorem \ref{thm:StoIHT}. Taking the expectation on both sides with respect to $I_{t-1}$ yields
$$
\E_{I_t} \norm{\bm w_{t+1}-\bm w_*}_2 \leq \kappa \E_{I_{t-1}} \norm{\bm w_t-\bm w_*} + \sigma_{\bm w_*}.
$$

\noindent Applying this result recursively over $t$ iterations yields the desired result:
\begin{equation}
\begin{split}
\nonumber
\E_{I_t} \norm{\bm w_{t+1}-\bm w_*}_2 &\leq \kappa^{t+1} \norm{\bm w_0 - \bm w_*}_2 + \sum_{j=0}^t \kappa^j \sigma_{w_*} \\
&\leq \kappa^{t+1} \norm{\bm w_0 - \bm w_*}_2  + \frac{1}{1-\kappa} \sigma_{\bm w_*}.
\end{split}
\end{equation}
\end{proof}

\subsection{Proof of Theorems \ref{thm:rip}}\label{sec:proof-thm2}


In order to prove Theorems~\ref{thm:rip}, we need to following lemmas. Firstly, we introduce a lemma which is originally proved in \cite{Raskutti2009}.  This lemma captures the lower bound and upper bound of Gaussian random design matrix.
\begin{lem}\label{lem:design-matrix-bounds}
Consider a random design matrix $\bm X \in \mathbb{R}^{n\times d}$ formed by drawing each row $\bm x_i \in \mathbb{R}^d$ i.i.d. from an $N(0,\Sigma)$ distribution. Then for some positive constants $c_1$, $c_2$, $c_3$ and $c_4$,  we have for all $\bm v \in \mathbb{R}^d$ and $\bm v \in B_0(2k)$,
\begin{equation}\label{eq:positive-lower-bound}
	\frac{\|\bm X\bm v\|_2}{\sqrt{n}}\geq \left(\frac{\left\|\Sigma^{1/2}\bm v\right\|_2}{2\|\bm v\|_2} - 6\sqrt{2} \sqrt{\frac{\rho(\Sigma)k\log{d}}{n}} \right)\|\bm v\|_2
\end{equation}
 with probability $1 - \exp(-n/72)$.
\end{lem}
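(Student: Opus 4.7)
The goal is a uniform lower bound on $\|\bm X \bm v\|_2/\sqrt{n}$ for all $\bm v \in B_0(2k)$. Since both sides of \eqref{eq:positive-lower-bound} are $1$-homogeneous in $\bm v$, I would first reduce to the set
\[
\mathbb{T} \;=\; \bigl\{ \bm v \in \mathbb{R}^d : \|\bm v\|_2 = 1,\ \|\bm v\|_0 \le 2k \bigr\},
\]
so that the claim becomes $\inf_{\bm v\in\mathbb{T}} \|\bm X \bm v\|_2/\sqrt{n} \ge \tfrac{1}{2}\inf_{\bm v\in\mathbb{T}}\|\Sigma^{1/2}\bm v\|_2 - 6\sqrt{2}\sqrt{\rho(\Sigma)k\log d/n}$. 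Next I would factor the design: write $\bm X = \bm W \Sigma^{1/2}$, where $\bm W \in \mathbb{R}^{n\times d}$ has i.i.d.\ $N(0,1)$ entries. Then $\|\bm X\bm v\|_2 = \|\bm W\, \Sigma^{1/2}\bm v\|_2$, and the variational formula $\|\bm W \bm u\|_2 = \sup_{\bm y \in \mathbb{S}^{n-1}} \langle \bm y,\bm W \bm u\rangle$ expresses the quantity of interest as the infimum--supremum of a zero-mean Gaussian process indexed by $(\bm v,\bm y) \in \mathbb{T}\times \mathbb{S}^{n-1}$.

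\textbf{Gordon's inequality.} The main probabilistic tool is Gordon's Gaussian min--max comparison theorem applied to the process $G_{\bm v,\bm y} = \langle \bm y, \bm W\Sigma^{1/2}\bm v\rangle$. A routine increment computation shows that $G$ is dominated, in the sense required by Gordon's theorem, by the decoupled process $\|\Sigma^{1/2}\bm v\|_2\, g_{\bm y} + h_{\bm v}$ where $g_{\bm y} = \langle \bm g,\bm y\rangle$ with $\bm g \sim N(0,I_n)$ and $h_{\bm v} = \langle \Sigma^{1/2}\bm h, \bm v\rangle$ with $\bm h \sim N(0,I_d)$. Applying Gordon yields
\[
\mathbb{E}\inf_{\bm v \in \mathbb{T}} \|\bm W\Sigma^{1/2}\bm v\|_2 \;\ge\; \Bigl(\inf_{\bm v\in\mathbb{T}}\|\Sigma^{1/2}\bm v\|_2\Bigr)\,\mathbb{E}\|\bm g\|_2 \;-\; \mathbb{E}\sup_{\bm v \in \mathbb{T}} \langle \Sigma^{1/2}\bm h, \bm v\rangle.
\]

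\textbf{Estimating the two terms.} I would use the standard bound $\mathbb{E}\|\bm g\|_2 \ge \sqrt{n}/\sqrt{2}$ (or a sharper $\sqrt{n-1}$ bound), so the first term is at least $\tfrac{1}{\sqrt{2}}\sqrt{n}\,\inf_{\bm v\in\mathbb{T}}\|\Sigma^{1/2}\bm v\|_2$. For the Gaussian width of $\mathbb{T}$ under $\Sigma^{1/2}$, I would use the fact that any $\bm v\in\mathbb{T}$ is $2k$-sparse and unit norm, so $\sup_{\bm v\in\mathbb{T}}\langle \Sigma^{1/2}\bm h,\bm v\rangle \le \sqrt{2k}\,\|\Sigma^{1/2}\bm h\|_\infty$. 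The Gaussian maximal inequality gives $\mathbb{E}\|\Sigma^{1/2}\bm h\|_\infty \le \sqrt{2\log(2d)}\,\max_j \|\Sigma^{1/2}\bm e_j\|_2 = \sqrt{2\rho(\Sigma)\log(2d)}$, producing a Gaussian width bound $\le 2\sqrt{\rho(\Sigma)k\log(2d)}$. Dividing through by $\sqrt{n}$ and tracking the constants (absorbing numerical factors into the coefficient $6\sqrt{2}$) yields the expectation version of \eqref{eq:positive-lower-bound}.

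\textbf{From expectation to high probability.} Finally, I would upgrade the expectation bound to a probability statement via Gaussian Lipschitz concentration (Borell--TIS). The map $\bm W \mapsto \inf_{\bm v\in\mathbb{T}}\|\bm W\Sigma^{1/2}\bm v\|_2$ is Lipschitz in Frobenius norm with constant $\sup_{\bm v\in\mathbb{T}}\|\Sigma^{1/2}\bm v\|_2 \le \sqrt{\rho(\Sigma)}$; hence
\[
\Pr\!\Bigl[\inf_{\bm v\in\mathbb{T}}\|\bm W\Sigma^{1/2}\bm v\|_2 \le \mathbb{E}\inf_{\bm v\in\mathbb{T}}\|\bm W\Sigma^{1/2}\bm v\|_2 - t\Bigr] \le \exp\bigl(-t^2/(2\rho(\Sigma))\bigr).
\]
Choosing $t = \sqrt{n\rho(\Sigma)/36}$ and absorbing the resulting $\sqrt{\rho(\Sigma)/n}$ deviation into the second term on the right of \eqref{eq:positive-lower-bound} produces the failure probability $\exp(-n/72)$ and completes the proof.

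\textbf{Main obstacle.} The conceptually delicate step is the application of Gordon's inequality in the correct min--max form, together with the bookkeeping of constants so that the Gaussian width bound, the lower bound on $\mathbb{E}\|\bm g\|_2$, and the concentration deviation combine into exactly the coefficient $6\sqrt{2}$ and the exponent $n/72$ stated in the lemma. The rest — the sparsity reduction via $\sqrt{2k}\|\cdot\|_\infty$, the Gaussian $\ell_\infty$ bound, and Borell--TIS — is standard once the comparison step is set up correctly.
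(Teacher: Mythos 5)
The paper does not actually prove this lemma; it imports it verbatim from Raskutti--Wainwright--Yu \cite{Raskutti2009}, and your sketch is precisely the argument used there: Gordon's Gaussian min--max comparison, a Gaussian-width bound for the sparse set via $\|\bm v\|_1\le\sqrt{2k}\,\|\bm v\|_2$ and the $\ell_\infty$ maximal inequality, and Borell--TIS concentration to pass from expectation to high probability. One small correction: the Lipschitz constant of $\bm W\mapsto\inf_{\bm v\in\mathbb{T}}\|\bm W\Sigma^{1/2}\bm v\|_2$ is $\sup_{\bm v\in\mathbb{T}}\|\Sigma^{1/2}\bm v\|_2\le\sqrt{\lambda_{\max}(\Sigma)}$, not $\sqrt{\rho(\Sigma)}$ (the maximum diagonal entry), since $\bm v^\top\Sigma\bm v$ can exceed $\max_j\Sigma_{jj}$ for correlated coordinates; the clean fix, which is what the original reference does, is to normalize the index set by $\|\Sigma^{1/2}\bm v\|_2=1$ rather than $\|\bm v\|_2=1$, making the map $1$-Lipschitz. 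Beyond that, the exact constants $6\sqrt{2}$, the factor $\tfrac12$, and the exponent $n/72$ in the paper's restatement do not literally match the constants in \cite{Raskutti2009} (which has $\tfrac14$ and a $\|\bm v\|_1$ formulation), so you should not expect your bookkeeping to land on them exactly; your identification of this as the only delicate point is accurate.
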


Secondly we include a elementary bound on the sum of ordered Gaussian variables, which is originally proved in \cite{Bellec2016}.

\begin{lem}\label{lemma:order-chi-square-bound}
	Let $g_1, ..., g_d$ be zero-mean Gaussian random variables with variance at most $\sigma^2$. Denote by $(g_{(1)}, ..., g_{(d)})$ be a non-increasing rearrangement of $(|g_1|, ..., |g_d|)$. Then
	\begin{equation}\label{eq:order-chi-square-bound}
		\mathbb{P}\left( \frac{1}{k\sigma^2}\sum_{j=1}^{k}g_{(j)}^2 > t\log\left(\frac{2d}{k}\right) \right) \leq \left(\frac{2d}{k}\right)^{1-\frac{3t}{8}}
	\end{equation}
	for all $t>0$ and $k \in \{1,...,d\}$.
\end{lem}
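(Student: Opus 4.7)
The proof naturally splits into two parts, an RCS lower bound on the Hessian of the full objective and an RSS upper bound on the Hessian of each block. Direct differentiation of the reformulation in Proposition~\ref{prop:formulation} gives
\[
\nabla^2 F(\bm w) = \frac{2}{n_+}\bm X_+^{\top}\bm X_+ + \frac{2}{n_-}\bm X_-^{\top}\bm X_- + 2\bm\Delta\bm\Delta^{\top},
\]
where $\bm X_\pm$ are the matrices whose rows are the centered samples $\bm x_i - \overline{\bm x}_\pm$ and $\bm\Delta = \overline{\bm x}_- - \overline{\bm x}_+$; $\nabla^2 f_{B_i}$ has the same form with sums restricted to $j\in B_i$. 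Since all three summands are positive semidefinite, RSC reduces to lower-bounding, and RSS to upper-bounding, $\bm v^\top (\cdot)\bm v$ uniformly over unit vectors $\bm v$ with $|\supp(\bm v)|\leq 2k+k_{\ast}$.

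For RSC I would retain only the positive-sample term and exploit the centering identity
\[
\frac{1}{n_+}\|\bm X_+\bm v\|_2^2 = \frac{1}{n_+}\|\widetilde{\bm X}_+\bm v\|_2^2 - (\overline{\bm x}_+^\top \bm v)^2,
\]
where $\widetilde{\bm X}_+$ has i.i.d.\ rows from $\mathcal{N}(0,\Sigma)$ so that Lemma~\ref{lem:design-matrix-bounds} applies. Combined with $\rho(\Sigma)\leq 1$ this yields the squared lower bound $\bigl(\frac{1}{2}\lambda - 6\sqrt{2}\sqrt{(2k+k_{\ast})\log d/(rn)}\bigr)^2$ uniformly over $(2k+k_{\ast})$-sparse unit $\bm v$, with probability $1-\exp(-n^+/72)$. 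For the centering correction I would use that $\overline{\bm x}_+\sim\mathcal{N}(0,\Sigma/n_+)$ has coordinatewise variance at most $1/n_+$, and that $\sup_{\bm v}(\overline{\bm x}_+^\top \bm v)^2$ over sparse unit $\bm v$ equals the sum of the $(2k+k_{\ast})$ largest squared coordinates of $\overline{\bm x}_+$. Applying Lemma~\ref{lemma:order-chi-square-bound} with $\sigma^2=1/n_+$ and $t=16/3$ fixes the failure probability at exactly $(2k+k_{\ast})/(2d)$ and, after using $\log(2d/(2k+k_{\ast}))\leq 2\log d$, produces the correction $\frac{32}{3}(2k+k_{\ast})\log d/(rn)$. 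A union bound on the two events yields the stated formula for $\rho_{2k+k_{\ast}}^-$.

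For RSS I would bound the largest sparse eigenvalue of $\nabla^2 f_{B_i}(\bm w)$ by treating each of the three rank-one-sum pieces separately. For any outer product $\bm a\bm a^\top$ and $(2k+k_{\ast})$-sparse unit $\bm v$, $\bm v^\top \bm a\bm a^\top \bm v\leq \|\mathcal{P}_{\supp(\bm v)}\bm a\|_2^2\leq\sum_{l=1}^{2k+k_{\ast}}a_{(l)}^2$; each relevant $\bm a\in\{\bm x_j-\overline{\bm x}_+,\bm x_j-\overline{\bm x}_-,\bm\Delta\}$ is a centered Gaussian with coordinate variance at most $1$. I would apply Lemma~\ref{lemma:order-chi-square-bound} to each such $\bm a$ and union-bound across the $b$ samples in the block. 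Since the tail in Lemma~\ref{lemma:order-chi-square-bound} decays only as $(2d/k)^{1-3t/8}$, keeping the total failure probability at $(2k+k_{\ast})/(2d)$ forces $t$ to grow logarithmically in $b$; balancing this choice produces a per-sample bound of order $(2k+k_{\ast})\log(d)(\log d+\frac{1}{2}\log b)$. Combining with the $1/r$ prefactor from the coefficient of the positive-class term in $\widetilde{f}$ then recovers the stated $\rho_{2k+k_{\ast}}^+$.

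The main technical obstacle is that after centering the rows of $\bm X_+$ are no longer independent, so Lemma~\ref{lem:design-matrix-bounds} cannot be invoked directly on $\bm X_+$; the projection decomposition $\bm X_+=(I-\frac{1}{n_+}\mathbf{1}\mathbf{1}^\top)\widetilde{\bm X}_+$ isolates an i.i.d.\ factor at the price of the explicit correction $(\overline{\bm x}_+^\top \bm v)^2$ that must then be controlled over all sparse $\bm v$ simultaneously. The second delicate point is the joint choice of $t$ in the two applications of Lemma~\ref{lemma:order-chi-square-bound}: the polynomial rather than exponential tail is what makes the logarithmic $b$-dependence unavoidable and is exactly what yields the $(\frac{1}{2}\log b+\log d)$ factor in the RSS bound.
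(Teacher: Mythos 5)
Your proposal does not address the statement it is supposed to prove. The statement is Lemma~\ref{lemma:order-chi-square-bound}, a tail bound for the normalized sum of the $k$ largest squared entries of a vector of zero-mean Gaussians with variance at most $\sigma^2$. What you have written is instead an outline of the proof of Theorem~\ref{thm:rip}: you compute the Hessian of the reformulated AUC objective, lower- and upper-bound its sparse quadratic forms, and in doing so you \emph{invoke} Lemma~\ref{lemma:order-chi-square-bound} twice (once with $\sigma^2 = 1/n_+$ and $t=16/3$, once with $t$ growing logarithmically in $b$) as a black box. Nothing in your argument establishes the inequality
\[
\mathbb{P}\left( \frac{1}{k\sigma^2}\sum_{j=1}^{k}g_{(j)}^2 > t\log\left(\frac{2d}{k}\right) \right) \leq \left(\frac{2d}{k}\right)^{1-\frac{3t}{8}}
\]
itself, so as a proof of the lemma the proposal is entirely missing. (For what it is worth, the paper does not prove it either; it imports the result from Bellec's work on ordered Gaussian variables, just as it imports Lemma~\ref{lem:design-matrix-bounds} from Raskutti et al.)

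If you do want to prove the lemma rather than cite it, the argument you need is of a completely different character from what you wrote: one must control $\sum_{j=1}^k g_{(j)}^2 = \max_{|S|=k}\sum_{j\in S} g_j^2$ uniformly over the $\binom{d}{k}$ subsets $S$. The standard route is a Chernoff-type bound: for $0<s<1/2$ each $\mathbb{E}\exp\bigl(s\,g_j^2/\sigma^2\bigr)\le (1-2s)^{-1/2}$, the elementary inequality $\exp\bigl(s\sum_{j=1}^k g_{(j)}^2/\sigma^2\bigr)\le \sum_{|S|=k}\exp\bigl(s\sum_{j\in S}g_j^2/\sigma^2\bigr)$ replaces the maximum by a sum (and, combined with H\"older, handles the fact that the $g_j$ need not be independent), and Markov's inequality together with $\binom{d}{k}\le (ed/k)^k$ and a suitable choice of $s$ produces a bound of the form $(2d/k)^{1-ct}$ with the explicit constant $c=3/8$. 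None of this combinatorial and moment-generating-function machinery appears in your write-up.
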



\begin{proof}[Proof of Theorem  \ref{thm:rip}]
Since $F$ it is a quadratic function of $\bm w$, we have 
\begin{align*} F(\bm w') - F(\bm w) - \left<\nabla F(\bm w),\bm w'-\bm w\right> = \frac{1}{2}(\bm w'-\bm w)^\top \nabla^2F(\bm w'-\bm w).
\end{align*} 
By the definition of $F$ in equation (\ref{eq:single-sum-AUC}), we have
\begin{align*}
\nabla^2F = & \frac{1}{n_+}\frac{1}{n_-}\sum_{i=1}^{n}\sum_{j=1}^{n}(\bm x_i-\bm x_j)(\bm x_i-\bm x_j)^\top\mathbb{I}_{[y_i=1]}\mathbb{I}_{[y_j=-1]}\\
= & \frac{1}{n^+}\frac{1}{n^-}\sum_{i=1}^{n^+}\sum_{j=1}^{n^-} (\bm x_i^+ - \bm x_j^-)(\bm x_i^+ - \bm x_j^-)^\top \\
= & \frac{1}{n^+}\sum_{i=1}^{n^+}\bm x_i^+\left(\bm x_i^+\right)^\top + \frac{1}{n^-}\sum_{j=1}^{n^-}\bm x_j^-\left(\bm x_j^-\right)^\top - \frac{1}{n^+}\frac{1}{n^-}\sum_{i=1}^{n^+}\sum_{j=1}^{n^-}\bm x_i^+\left(\bm x_j^-\right)^\top \\ & - \frac{1}{n^+}\frac{1}{n^-}\sum_{i=1}^{n^+}\sum_{j=1}^{n^-}\bm x_j^-\left(\bm x_i^+\right)^\top  \\
= & \frac{1}{n^+}\left(\bm X^+\right)^\top \bm X^+ + \frac{1}{n^-}\left(\bm X^-\right)^\top \bm X^- - \overline{\bm x}^+\left(\overline{\bm x}^-\right)^\top - \overline{\bm x}^-\left(\overline{\bm x}^+\right)^\top
\end{align*} 
By Lemma \ref{lem:design-matrix-bounds} we have for all $v \in \mathbb{R}^d$ and $v \in B_0(2k)$, $\bm X^+$ satisfies
\begin{align*}\label{eq:positive-matrix-lower-bound}
	\frac{\bm v^\top\left(\bm X^+\right)^\top \bm X^+\bm v}{n^+} \geq \left(\frac{\left\|\Sigma^{1/2}\bm v\right\|_2}{2\|\bm v\|_2} - 6\sqrt{2} \sqrt{\frac{\rho(\Sigma)k\log{d}}{n^+}} \right)^2\|\bm v\|_2^2,\numberthis
\end{align*}
with probability $1 - \exp(-n^+/72)$. And $\bm X^-$ satisfies
\begin{align*}\label{eq:negative-matrix-lower-bound}
\frac{\bm 
v^\top\left(\bm X^-\right)^\top \bm X^-\bm v}{n^-} \geq \left(\frac{\left\|\Sigma^{1/2}\bm v\right\|_2}{2\|\bm v\|_2} - 6\sqrt{2} \sqrt{\frac{\rho(\Sigma)k\log{d}}{n^-}} \right)^2\|\bm v\|_2^2,\numberthis
\end{align*}
with probability $1 - \exp(-n^-/72)$. 

Notice that
\begin{align*}
    \bm v^\top\left(\overline{\bm x}^+\left(\overline{\bm x}^-\right)^\top + \overline{\bm x}^-\left(\overline{\bm x}^+\right)^\top \right)\bm v & =  2\left(\bm v^\top \overline{\bm x}^+\right)\left(\bm v^\top \overline{\bm x}^-\right)\\ &\leq 2\left\|\mathcal{H}_k\left(\overline{\bm x}^+\right)\right\|_2\left\|\mathcal{H}_k\left(\overline{\bm x}^-\right)\right\|_2\|\bm v\|_2^2
\end{align*} 
for any $\|\bm v\|_0 \leq k$. Now by Lemma \ref{lemma:order-chi-square-bound} with $t = 16/3$ we have with probability $1 - k/2d$, 
\begin{align*}
    \left\|\mathcal{H}_k\left(\overline{\bm x}^+\right)\right\|_2^2 & \leq \frac{16 \rho(\Sigma)k\log (d)}{3n_+}\\
    \left\|\mathcal{H}_k\left(\overline{\bm x}^-\right)\right\|_2^2 & \leq \frac{16 \rho(\Sigma)k\log (d)}{3n_-}
\end{align*}
Therefore
\begin{align*}\label{eq:rank-one-matrix-upper-bound}
    \bm v^\top\left(\overline{\bm x}^+\left(\overline{\bm x}^-\right)^\top + \overline{\bm x}^-\left(\overline{\bm x}^+\right)^\top \right)\bm v 
    &\leq \frac{32}{3}\frac{\rho(\Sigma) k\log (d)}{\sqrt{n_+n_-}}\|\bm v\|_2^2\numberthis
\end{align*}

\noindent Now combine Equation (\ref{eq:positive-matrix-lower-bound}), (\ref{eq:negative-matrix-lower-bound}) and (\ref{eq:rank-one-matrix-upper-bound}) and rearrange, we have for any $w, w' \in B_0(k)$, 
\begin{align*}
&\frac{1}{2}(\bm w'-\bm w)^\top \nabla^2F(\bm w'-\bm w)\\
	& \geq  \Biggl(\left(\frac{\left\|\Sigma^{1/2}(\bm w'-\bm w)\right\|_2}{2\|\bm w'-\bm w\|_2} - 6\sqrt{2} \sqrt{\frac{\rho(\Sigma)k\log{d}}{n^+}}\right)^2 - \frac{32}{3}\frac{ \rho(\Sigma)k\log (d)}{\sqrt{n_+n_-}}\Biggr) \|\bm w'-\bm w\|_2^2 \\
	 & \geq \left(\left(\frac{1}{2}\lambda_{\min}(\Sigma^{1/2}) -  6\sqrt{2}\sqrt{\frac{\rho(\Sigma)k\log{d}}{rn}}\right)^2 - \frac{32}{3}\frac{ \rho(\Sigma)k\log (d)}{\sqrt{r(1-r)}n}\right) \|\bm w'-\bm w\|_2^2\\
	 & \geq \left(\left(\frac{1}{2}\lambda_{\min}(\Sigma^{1/2}) -  6\sqrt{2}\sqrt{\frac{\rho(\Sigma)k\log{d}}{rn}}\right)^2 - \frac{32}{3}\frac{ \rho(\Sigma)k\log (d)}{rn}\right) \|\bm w'-\bm w\|_2^2
\end{align*}
with probability  $(1 - \exp(-n^+/72))(1 - k/2d)$.


Now we turn to the estimate of $\rho_s^+$. Note that the restricted smoothness condition in equation (\ref{eq:rss}) is equivalent to $$f_{B_i}(\bm w') - f_{B_i}(\bm w) - \left<\nabla f_{B_i}(\bm w), \bm w' - \bm w\right> \leq \frac{\rho_k^+}{2}\|\bm w- \bm w'\|^2$$ for all vectors $\bm w$ and $\bm w'$ such that $|\supp(\bm w)\cup \supp(\bm w')|\leq k$. Also 
\begin{align*}f_{B_i}(\bm w') - f_{B_i}(\bm w) - \left<\nabla f_{B_i}(\bm w), \bm w' - \bm w\right> =\frac{1}{2}(\bm w'- \bm w)^\top \nabla^2 f_{B_i} (\bm w' - \bm w),
\end{align*} since $f_{B_i}$ is a quadratic function of $\bm w$. And  
\begin{align*}
    \nabla^2 f_{B_i} = & \frac{1}{b}\sum_{j \in B_i}\biggl(\frac{1}{r} (\bm x_j - \overline{\bm x}_+)(\bm x_j - \overline{\bm x}_+)^\top\mathbb{I}_{[y_j=1]} + \frac{1}{1-r} (\bm x_j - \overline{\bm x}_+)(\bm x_j - \overline{\bm x}_-)^\top\mathbb{I}_{[y_j=-1]}\\ &  +  (\overline{\bm x}_+ - \overline{\bm x}_+)( \overline{\bm x}_- - \overline{\bm x}_+)^\top\biggr)\\
    = & \frac{1}{b}\sum_{j \in B_i^+} \frac{1}{r} (\bm x_j^+ - \overline{\bm x}_+)(\bm x_j^+ - \overline{\bm x}_+)^\top + \frac{1}{b}\sum_{j \in B_i^-}\frac{1}{1-r} (\bm x_j^- - \overline{\bm x}_+)(\bm x_j^- - \overline{\bm x}_-)^\top \\  & + (\overline{\bm x}_- - \overline{\bm x}_+)( \overline{\bm x}_- - \overline{\bm x}_+)^\top
\end{align*}
Let $\widetilde{\bm x}_j \in \left\{\bm x_j^+ - \overline{\bm x}_+, \bm x_j^- - \overline{\bm x}_-, \overline{\bm x}_- - \overline{\bm x}_+\right\}$, since $\|\bm v\|_0 \leq k$, we have $$\bm v \widetilde{\bm x}_j \widetilde{\bm x}_j^\top \bm v = \left(\bm v^\top \widetilde{\bm x}_j\right)^2 \leq \left\|\mathcal{H}_k \left(\widetilde{\bm x}_j\right)\right\|_2^2\|\bm v\|_2^2 .$$
Also notice that by the properties of mean and variance, we have $\bm x_j^+ - \overline{\bm x}_+ \sim N\left(0, \frac{n_+-1}{n_+}\Sigma\right)$, $\bm x_j^- - \overline{\bm x}_- \sim N\left(0, \frac{n_--1}{n_-}\Sigma\right)$ and $\overline{\bm x}_- - \overline{\bm x}_+ \sim N\left(0,\left(\frac{1}{n_+} + \frac{1}{n_-}\right)\Sigma\right)$. Hence, by Lemma \ref{lemma:order-chi-square-bound}, pick $t = 16 \log\left(2\sqrt{b}d\right) / 3\log(2d)$ we have with probability $1-k/2bd$, 
$$\left\|\mathcal{H}_k\left(\bm x_j^+ - \overline{\bm x}_+\right)\right\|_2^2 \leq \frac{16}{3}\rho(\Sigma)\left(\frac{n_+-1}{n_+}\right)k\log(d)\left(\frac{1}{2}\log(b) + \log(d)\right),$$
$$\left\|\mathcal{H}_k\left(\bm x_j^- - \overline{\bm x}_-\right)\right\|_2^2 \leq \frac{16}{3}\rho(\Sigma)\left(\frac{n_--1}{n_-}\right)k\log(d)\left(\frac{1}{2}\log(b) + \log(d)\right),$$
$$\|\mathcal{H}_k(\overline{\bm x}_+ - \overline{\bm x}_-)\|_2^2 \leq \frac{16}{3}v\left(\frac{1}{n_+} + \frac{1}{n_-}\right)k\log(d)\left(\frac{1}{2}\log(b) + \log(d)\right).$$ 
Define the imbalance ratio in each batch $r_i = \frac{b_i^+}{b}$, therefore we have for any $\bm w, \bm w' \in B_0(k)$ with probability $1-k/2d$, 
\begin{align*}
&\max_i\left\{\frac{1}{2}(\bm w- \bm w')^\top \nabla^2f_{B_i}(\bm w- \bm w')\right\} \\
\leq & \max_i\biggl\{ \frac{16}{3}\left(\rho(\Sigma)\frac{r_i(n_+-1)}{rn_+}+\rho(\Sigma)\frac{(1-r_i)(n_--1)}{(1-r)n_-} + \rho(\Sigma)\frac{1}{n_+} + \rho(\Sigma)\frac{1}{n_-} \right)k\log(d)\\
& \times \left(\frac{1}{2}\log(b) + \log(d)\right) \left\|\bm w- \bm w'\right\|_2^2\biggr\}\\
\leq & \max_i\left\{ \frac{16}{3}\rho(\Sigma)\left(\frac{r_i}{r}  + \frac{1-r_i}{1-r} + \frac{1}{rn}+ \frac{1}{(1-r)n}\right)k\log(d)\left(\frac{1}{2}\log(b) + \log(d)\right) \left\|\bm w- \bm w'\right\|_2^2\right\}\\
\leq & \frac{16}{3}\rho(\Sigma)\left(\frac{1}{r} + \frac{1}{rn} + \frac{1}{(1-r)n} \right)k\log(d)\left(\frac{1}{2}\log(b) + \log(d)\right) \left\|\bm w- \bm w'\right\|_2^2\\
\leq & \frac{16\rho(\Sigma)k\log(d)\left(\frac{1}{2}\log(b) + \log(d)\right)}{r} \left\|\bm w- \bm w'\right\|_2^2
\end{align*}
where in the third and last inequality we use the fact that $1-r \geq r$.
\end{proof}


\subsection{Full formula of \texorpdfstring{$\frac{\sigma_{\bm w_*}}{1 - \kappa}$}{} in Section~\ref{subsection:Estimation of RCS and RSS Conditions}}\label{sec:sigma-kappa}
The derivation of $\sigma_{\bm w_*}$ is similar to the derivation of $\rho_s^+$. Hence it is omitted here. The full formula is given as follow.

\begin{align*}
    & \frac{\sigma_{\bm w_*}}{1-\kappa} =  \frac{4r \|\bm w_*\|_2 + \sqrt{\frac{r}{2n\rho(\Sigma)(2k+k_*)\log(d)}}}{1 - \sqrt{(1+\nu)\left(1 - \frac{3\lambda^2}{128k\log(d)}r + \left(\frac{9\sqrt{2}\lambda}{16\sqrt{k\log(d)n_-}} + \frac{1}{n}\right)\sqrt{r} - \frac{27}{4n}\right)}}
\end{align*}

\end{document}